\documentclass[twoside]{article}

\usepackage[accepted]{aistats2026}
\makeatletter
\renewcommand{\AISTATS@appearing}{%
Proceedings of the \@conferenceordinal\,International Conference on Artificial
Intelligence and Statistics (AISTATS) \@conferenceyear, \@conferencelocation\@.
PMLR: Volume \@conferencevolume. Copyright \@conferenceyear\/ by the author(s). 
\textsuperscript{*}Equal contribution.
\textsuperscript{$\dagger$}Corresponding author.%
}
\makeatother

\usepackage[round]{natbib}

\usepackage[utf8]{inputenc}   
\usepackage[T1]{fontenc}      

\usepackage[dvipsnames]{xcolor} 
\usepackage{microtype}          
\usepackage{enumitem}

\usepackage{tikz}
\usetikzlibrary{arrows.meta,positioning,fit,backgrounds,calc}

\usepackage{amsmath,amssymb,amsfonts}
\usepackage{bbm} 
\usepackage{mathtools}          

\usepackage{amsmath,amsfonts,bm}

\def\eqref#1{equation~\ref{#1}}

\def\1{\bm{1}}

\DeclareMathAlphabet{\mathsfit}{\encodingdefault}{\sfdefault}{m}{sl}
\SetMathAlphabet{\mathsfit}{bold}{\encodingdefault}{\sfdefault}{bx}{n}

\def\gD{{\mathcal{D}}}

\newcommand{\E}{\mathbb{E}}

\usepackage{adjustbox}
\usepackage{amsthm}
\usepackage{wasysym}
\theoremstyle{plain}
\newtheorem{theorem}{Theorem} 

\theoremstyle{definition}
\newtheorem{definition}[theorem]{Definition}
\newtheorem{assumption}[theorem]{Assumption}

\theoremstyle{remark}

\usepackage{booktabs}           
\usepackage{multirow}           
\usepackage{wrapfig}            

\usepackage{algorithm}
\usepackage{algpseudocode}

\usepackage{nicefrac}           
\usepackage{pifont}             
\usepackage{comment}            
\usepackage{float}
\usepackage{caption} 

\usepackage{url}                
\usepackage[colorlinks=true,
            citecolor=MidnightBlue,
            linkcolor=magenta!60,
            urlcolor=blue]{hyperref} 
\usepackage[capitalize,noabbrev]{cleveref} 

\newcommand{\tr}{\textcolor{red}}
\newcommand{\tb}{\textcolor{blue}}
\newcommand{\tg}{\textcolor{green}}
\newcommand{\tgr}{\textcolor{gray}}
\newcommand{\tp}{\textcolor{magenta}}

\begin{document}

%
\runningtitle{Lyapunov‑Guided Self‑Alignment}

%
\runningauthor{S. Han, H. Kim, J. Lee}





\twocolumn[

\aistatstitle{Lyapunov‑Guided Self‑Alignment: Test‑Time Adaptation \\ for Offline Safe Reinforcement Learning}

\aistatsauthor{Seungyub Han \textsuperscript{*} \And Hyungjin Kim\textsuperscript{*} \And  Jungwoo Lee\textsuperscript{$\dagger$}}
\aistatsaddress{ Seoul National University }]




\begin{abstract}
Offline reinforcement learning (RL) agents often fail when deployed, as the gap between training datasets and real environments leads to unsafe behavior.  
To address this, we present SAS (Self-Alignment for Safety), a transformer-based framework that enables test-time adaptation in offline safe RL without retraining.  
In SAS, the main mechanism is self-alignment: at test time, the pretrained agent generates several imagined trajectories and selects those satisfying the Lyapunov condition.  
These feasible segments are then recycled as in-context prompts, allowing the agent to realign its behavior toward safety while avoiding parameter updates.  
In effect, SAS turns Lyapunov-guided imagination into control-invariant prompts, and its transformer architecture admits a hierarchical RL interpretation where prompting functions as Bayesian inference over latent skills.  
Across Safety Gymnasium and MuJoCo benchmarks, SAS consistently reduces cost and failure while maintaining or improving return.
\end{abstract}

\section{INTRODUCTION}\label{sec:intro}
Ensuring safety is a fundamental challenge for deploying RL in the real world. 
Recent advances in deep RL algorithms \citep{haarnoja2018soft, janner2019trust, lee2023spqr, eysenbach2022mismatched} 
have shown remarkable performance in simulation, but translating these gains to practice remains difficult, 
especially when a downstream controller must operate underactuated robots \citep{tedrake2009underactuated} 
or uncertain environments. 
Even small distribution shifts between training and deployment can cause highly unsafe behaviors, 
making safety a first-class requirement for RL deployment.

Offline RL \citep{kumar2020conservative} provides a promising step toward safer deployment, 
as it allows agents to be pretrained on large curated datasets 
such as D4RL \citep{fu2021drl}, RL-unplugged \citep{gulcehre2020rl}, and DSRL \citep{liu2023datasets}. 
By leveraging prior experience, offline RL agents can achieve high performance without direct online exploration. 
However, naively deploying a pretrained model is not sufficient: 
at test time, the environment often differs in unpredictable ways, 
and the resulting distribution shift may drive the agent into unsafe regions. 
Recent work has highlighted this challenge by modeling test-time uncertainty and partial observability 
\citep{pmlr-v162-ghosh22a, ghosh2021why}, 
showing that even for identical observations, transition can vary significantly due to latent dynamics.

\begin{figure*}[t]
    \centering
    \includegraphics[width=0.8\linewidth]{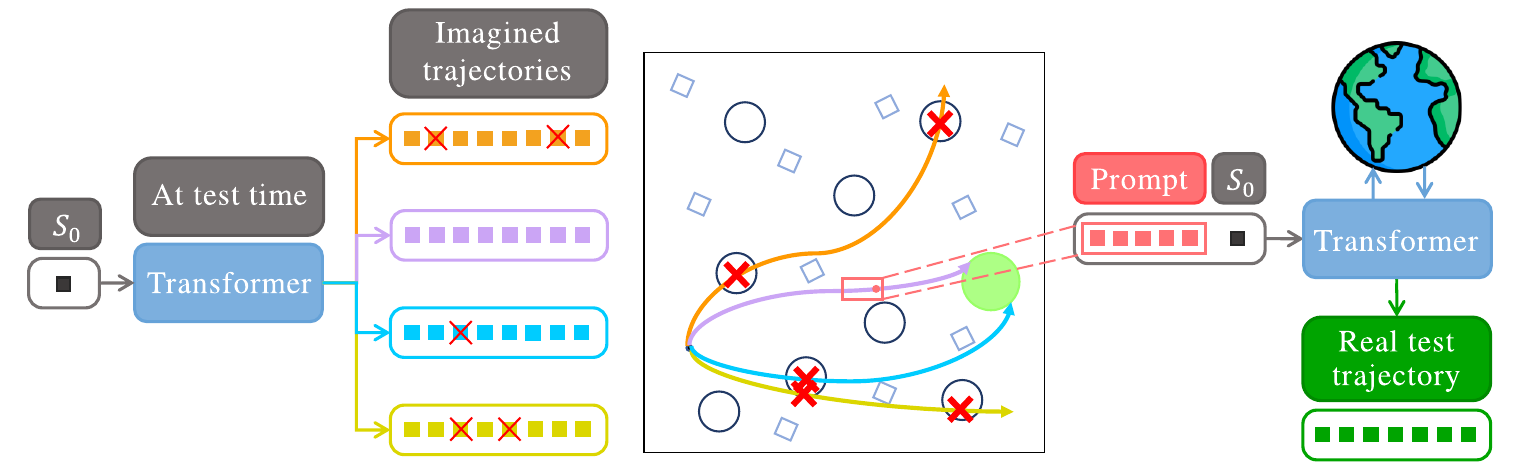}
\caption{
\emph{SAS overview}. From a fixed initial state, the transformer imagines multiple rollouts, flags risky state–action pairs using the Lyapunov condition with (\textcolor{red}{\ding{54}}), and extracts a safe segment as a prompt to guide the real test-time trajectory (hazards: black \textcolor{black}{$\bigcirc$}, blue \textcolor{blue}{$\Diamond$}; goal: green \textcolor{lime}{\ding{108}}).
}
    \label{fig:enter-label2}
\end{figure*}

To address this, prior approaches have pursued two directions. 
Belief-based adaptation augments the policy with latent belief variables to model uncertainty, 
but requires retraining from scratch for effective deployment. 
Conservative or constrained RL methods introduce penalties to reduce risk, 
but often sacrifice performance or require fine-tuning with additional cost signals. 
These limitations make it difficult to align pretrained offline RL agents 
with safety requirements at test time in a practical and scalable way.

In parallel, the field of large language models (LLMs) has demonstrated 
that pretrained models can be efficiently adapted to new objectives via \emph{self-alignment}, 
where models leverage their own reasoning ability to generate instruction-like prompts 
without human supervision \citep{sun2023principledriven}. 
Inspired by this idea, we ask: \emph{can offline RL agents be self-aligned for safety at test time, 
using only their pretrained distribution and without any retraining?}

We propose \emph{Self-Alignment for Safety (SAS)}, a transformer-based framework 
for safe test-time adaptation without additional training. 
SAS is based on two key insights: 
(i) Lyapunov stability can be enforced through occupancy measures, 
allowing the agent to identify control-invariant trajectories that avoid unsafe regions; and 
(ii) transformer-based RL naturally admits a hierarchical RL interpretation, 
where test-time prompting corresponds to Bayesian inference over latent high-level skills. 
These enable SAS to generate Lyapunov-guided prompts from imagined trajectories, 
and align agent behavior safely through in-context learning.
Our contributions are summarized as follows.
\begin{itemize}
    \item A new formulation of Lyapunov stability for offline RL as an occupancy-measure criterion.  
    \item A hierarchical RL interpretation of transformer-based RL via Bayesian inference over latent skills.  
    \item Our method, SAS uses Lyapunov-guided prompts for safe test-time adaptation.  
    \item Empirical evidence that SAS outperforms safe RL baselines, reducing cost and failure by up to two times while maintaining or improving return.  
\end{itemize}


\section{PRELIMINARIES}

\subsection{Problem Setting}
We consider a discounted Markov Decision Process (MDP)
$\langle \mathcal{S}, \mathcal{A}, r, c, \mathcal{P}, p_1, \gamma \rangle$,
where $\mathcal{S}$ and $\mathcal{A}$ are the state and action spaces.
Reward and cost are
$r, c: \mathcal{S}\times\mathcal{A} \to \mathbb{R}$,
and the transition kernel $\mathcal{P}: \mathcal{S}\times\mathcal{A} \to \Delta(\mathcal{S})$ gives the next-state distribution.
$p_1$ is the initial state distribution and $\gamma \in (0,1)$ the discount factor.
We assume the per-step cost is uniformly bounded:
\begin{equation}\label{eq:cost_bound}
    0 \;\le\; c(s,a) \;\le\; C_{\max}, \quad \forall (s,a)\in\mathcal{S}\times\mathcal{A}.
\end{equation}
This assumption is standard in constrained MDP (CMDP) and ensures the discounted cost is well-defined.

We also introduce a latent skill space $\mathcal{Z}$ and consider hierarchical policies. 
The high-level policy $\pi^{\textnormal{high}}_{\theta}(z_t \mid s_t, z_{t-1})$ 
selects a new skill $z_t \in \mathcal{Z}$ given the current state and the previously chosen skill, 
and the low-level policy $\pi^{\textnormal{low}}_{\phi}(a_t \mid s_t, z_t)$ 
executes environment actions conditioned on both the state and the selected skill.
For simplicity, we use the notation $\pi$ to denote the overall policy, 
which encompasses both the high-level policy $\pi^\text{high}$ 
and the low-level policy $\pi^\text{low}$.

\paragraph{Hierarchical RL as Probabilistic Inference.}
We interpret hierarchical RL as a probabilistic graphical model (PGM). 
The joint distribution $p(\tau)$ over a trajectory 
$\tau = (s_{1:T}, a_{1:T}, z_{1:T})$ factorizes as
\begin{equation*}
    p_1(s_1) \prod_{t=1}^{T}
    \mathcal{P}(s_{t+1}\mid s_t,a_t)\,
    \underbrace{p(a_t\mid s_t,z_t)}_{\pi_\phi^{\textnormal{low}}}\,
    \underbrace{p(z_t\mid s_t,z_{t-1})}_{\pi_\theta^{\textnormal{high}}},
\end{equation*}
where $z_t$ is a latent skill, $p_1(s_1)$ is the initial state distribution, 
and $\mathcal{P}$ denotes the transition kernel. 



We remark that for $t=1$, the term $z_{0}$ is undefined; here we adopt a mild abuse of notation, treating $\pi^{\textnormal{high}}(z_1 \mid s_1,z_0)$ as $\pi^{\textnormal{high}}(z_1 \mid s_1)$ (equivalently, $z_0$ can be regarded as drawn from a prior).
Following the control-as-inference framework \citep{levine2018rlaspi}, 
we also introduce the \emph{optimality variable} 
$\mathcal{O}_t \in \{0,1\}$, a binary observable that encodes reward-based 
optimality. 
Later, we will define additional binary observables for Lyapunov or safety 
conditions. 
While all such observables are binary indicators, they differ in their role: 
$\mathcal{O}_t$ connects rewards to the PGM, whereas the additional variables 
capture safety-related constraints. 
This unified PGM view, including the observable nodes, is illustrated in \cref{fig:pgm1}.


\begin{figure}[t]
\centering

\tikzset{
  node/.style={circle,draw,minimum size=22pt,inner sep=0pt,thick,font=\large},
  var/.style={node,fill=white},        
  skill/.style={node,fill=white},      
  opt/.style={node,fill=gray!20},      
  >={Latex[length=2mm]},
  edge/.style={->,thick},
  faint/.style={->,line width=0.9pt},
}

\begin{tikzpicture}[x=2.6cm,y=1.6cm]
  \def\yO{2.0}   
  \def\ya{1.1}   
  \def\ys{0.3}   
  \def\yz{-0.5}  

  \def\dx{0.70}  
  \def\dyA{0.65} 
  \def\dyO{1.20} 

  \foreach \t/\x in {1/0,2/1,3/2} {
    \node[opt]   (O\t) at (\x,\yO) {$\mathcal{O}_{\t}$};
    \node[var]   (a\t) at (\x,\ya) {$a_{\t}$};
    \node[var]   (s\t) at (\x,\ys) {$s_{\t}$};
    \node[skill] (z\t) at (\x,\yz) {$z_{\t}$};
  }

  \node[fit=(z1)(z2)(z3), rounded corners, draw, dashed, inner sep=6pt] {};

  \foreach \t in {1,2,3} {
    \draw[edge] (s\t) -- (a\t);     
    \draw[edge] (a\t) -- (O\t);     
    \draw[edge] (s\t) -- (z\t);     
  }

\foreach \t in {1,2,3} {
  \draw[edge] (s\t) to[out=140, in=220, looseness=1.1] (O\t);

  \draw[edge] (z\t) to[out=140, in=220, looseness=1.0] (a\t);
}

  \foreach \t/\tp in {1/2,2/3} {
    \draw[edge] (s\t) -- (s\tp);                   
    \draw[edge] (z\t) -- (z\tp);                   
    \draw[faint] (a\t) to (s\tp);     
  }

\end{tikzpicture}

\caption{Hierarchical RL as probabilistic inference: a hidden Markov model with latent skills $z_t$, the optimality variable $\mathcal{O}_t$ linking rewards, and observables for safety conditions.}
\label{fig:pgm1}
\end{figure}
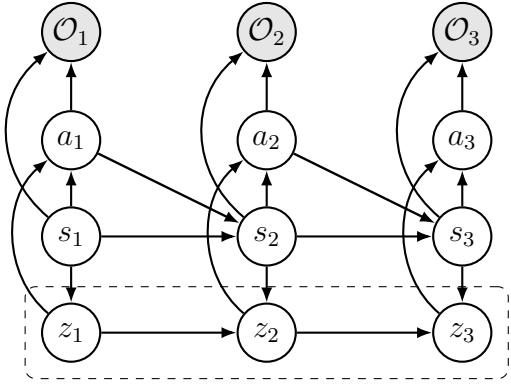

\subsection{Occupancy Measure for Safe RL}

Transformer-based RL can autoregressively predict actions and, when paired with a generative model such as a VAE \citep{kingma2013auto}, also generate plausible future states. This allows us to construct imagined trajectories, i.e., rollouts that simulate how the agent would behave under the learned dynamics. In our setting, the VAE decoder serves as a \emph{world model} and is integrated into a Decision Transformer \citep{chen2021decision} (see \cref{fig:dt}).

Given this pretrained transformer with its integrated world model, we define the \emph{model-based occupancy measure} $\hat{\rho}^\pi$, which approximates the true occupancy measure by using imagined trajectories generated from the learned model. Formally, let $\hat{\mathcal{P}}(s' \mid s,a)$ denote the learned transition model (the VAE decoder) and $\pi(a \mid s)$ the policy induced by the transformer, where we write $\pi(a\mid s)=\sum_{z}\pi^{\textnormal{low}}(a\mid s,z)\,\pi^{\textnormal{high}}(z\mid s, z_{-1})$ and omit $z$ for brevity. Then \footnote{Unormalized version. The $(1-\gamma)$ factor is omitted since it can be canceled in CMDP objectives.}
\begin{align}\label{eq:occupancy_measure}
\hat{\rho}^\pi(s,a)
= \sum_{t=1}^{\infty}\gamma^{t-1}\,
\underset{\pi,\hat{\mathcal P}}{\mathbb{P}}\!\big(s_t=s,\ a_t=a \mid s_1 \sim p_1\big).
\end{align}
We use $\mathcal{P}$ for the true environment and $\hat{\mathcal{P}}$ for the learned world model dynamics.

\paragraph{Connection to Safe RL.}
We formulate safe RL as a constrained optimization problem:
\begin{align}\label{eq:cmdp}
    \max_{\pi} \; J_{R}(\pi) \quad \text{s.t.} \quad J_{C}(\pi) \leq d,
\end{align}
where 
$J_{R}(\pi)=\mathbb{E}_{(s,a)\sim \rho^\pi}[\,r(s,a)\,]$ and 
$J_{C}(\pi)=\mathbb{E}_{(s,a)\sim \rho^\pi}[\,c(s,a)\,]$.
This form shows that the optimal policy $\pi^*$ must assign low occupancy to high-cost regions; in other words, safety constraints act directly on the distribution of visited state–action pairs.

\paragraph{Generalization with Offline Data.}
Offline datasets are often heterogeneous, containing both expert and medium-quality trajectories.  
Following \citet{rashidinejad2021bridging}, we introduce the best concentrability coefficient $D_\text{conc}\!\ge\!1$, the smallest constant:
\begin{equation*}
    \hat{\rho}^{\pi^*}(s,a) \;\le\; D_{\text{conc}}\, \rho_{\textnormal{data}}(s,a),
    \quad \forall (s,a)\in \mathcal{S} \times \mathcal{A}
\end{equation*}
and $\rho_\text{data}>0.$
A smaller $D_{\text{conc}}$ indicates that the dataset provides good coverage and thus tighter safety guarantees, 
whereas a larger $D_{\text{conc}}$ reflects sparse or biased data and results in weaker guarantees.
Combining this bound with the cost boundedness in \cref{eq:cost_bound},  
we obtain the conservative sufficient condition
\begin{equation}\label{eq:R_rho}
    \mathcal{R}_\rho \;=\; \Bigl\{ (s,a) \;\Big|\; \hat{\rho}^\pi(s,a) \;\ge\; \tfrac{d}{C_{\max} D_{\text{conc}}} \Bigr\},
\end{equation}
which yields a tractable feasible region for satisfying the CMDP constraint in \cref{eq:cmdp}.

\paragraph{Interpretation.}
\cref{eq:cmdp} reveals an inverse relationship between occupancy and cost:  
a safe policy must avoid placing significant probability mass in high-cost regions.  
$D_{\text{conc}}$ adapts this principle to the offline setting by accounting for dataset coverage.  

\subsection{Lyapunov Density and Control-Invariant Sets for Safety}

Building on the feasible set $\mathcal{R}_\rho$, 
we now turn to the \textbf{Lyapunov Density Model (LDM)} \citep{kang2022lyapunov}, 
which formalizes safety as density-based monotonicity 
and provides a novel way to define control-invariant sets. 
Practically, restricting the agent to actions within $\mathcal{R}_\rho$ ensures 
that the safety constraint in \cref{eq:cmdp} is satisfied, 
and \cref{fig:enter-label2} illustrates how imagined trajectories are filtered 
using both occupancy measures and Lyapunov conditions. 
Unlike classical state-only control Lyapunov functions (CLFs), 
LDM is formulated in a $(s,a)$-based manner: the Lyapunov surrogate $G(s,a)$ is 
\textbf{learned from offline data}.
LDM posits a Lyapunov-like function 
$G:\mathcal{S}\times\mathcal{A}\to\mathbb{R}_{\ge0}$ 
satisfying:
\begin{enumerate}
    \item $G(s_e,a_e)=0$ for an equilibrium $(s_e,a_e)$
    \item $G(s,a)>0$ for all $(s,a)\neq (s_e,a_e)$
    \item $G(s,a)\,\ge\, \gamma\mathbb{E}_{s'\sim \mathcal{P}(\cdot|s,a)}\!\big[ \min_{a'} G(s',a') \big]$.
\end{enumerate}
These conditions ensure that $G$ is \emph{non-increasing} along feasible trajectories:  
whenever $(s',a')$ is the next state–action pair reached from $(s,a)$ by the following \emph{greedy policy}, 
the value of $G$ cannot increase.
\textbf{LDM backup operator} is defined as 
a Bellman-style recursion adapted to enforce Lyapunov density conditions:
\begin{align*}
    \big(\mathcal{T}_{\text{LDM}}G\big)(s,a) 
    &= \max \Big\{ -\log \hat\rho^\pi(s,a), \\
    &\qquad \gamma\, \mathbb{E}_{s' \sim P(\cdot|s,a)} \big[\min_{a'} G(s',a')\big] \Big\}
\end{align*}

where the first term ties G to the estimated density, and the second term enforces discounted
Lyapunov descent under the learned dynamics.
The $\max$ selects the more conservative of the two constraints.

\paragraph{Control-Invariant Set for Safety.}
Define the control (forward)-invariant set as
\begin{equation}
    \mathcal{R}_{c} = \{(s,a)\mid G(s,a)\le c\}, \quad c>0.
\end{equation}
$G$ is non-increasing along feasible trajectories.  
Thus, a \emph{greedy policy} such as $\arg\min_a G(s,a)$ cannot escape from $\mathcal{R}_c$, 
and every trajectory remains inside while asymptotically converging to the equilibrium~point.

\section{FROM OFFLINE RL TO LYAPUNOV-BASED SAFETY EVALUATION}
\label{sec:from_offline_rl}
In this section, we establish a principled transition from offline RL to Lyapunov-based safety evaluation. 
While offline RL provides a pretrained policy distribution, it does not guarantee safety at deployment, 
as trajectories may still enter unsafe regions due to limited coverage. 
To address this, we reinterpret Lyapunov stability in terms of occupancy measures estimated from offline RL models, 
and introduce our Lyapunov model $G_{\text{SAS}}$ as a principled way to evaluate and enforce safety. 
This section formalizes the connection between safe RL and occupancy measures, presents our core Lyapunov model, 
and develops a probabilistic inference framework that enables trajectory-level search for safety.

\subsection{Lyapunov Model for Offline Safe RL}

Our central contribution in this section is to extend the Lyapunov Density Model (LDM) with occupancy measures, 
yielding a Lyapunov model tailored for offline safe RL. 
We define the energy function based on the estimated occupancy measure $\hat{\rho}^\pi$ in \cref{eq:occupancy_measure}:

\[
E(s,a) = - \log \hat{\rho}^\pi(s,a),
\]
which serves as an energy function that provides an upper bound for the control-invariant set. 
Controlling with the learned LDM guarantees that the agent stays within this set. 
Extending this idea, we define our Lyapunov model for safety evaluation as
\begin{align}\label{eq:our_lyapunov}
&G_\textnormal{SAS}(s,a) = \min_{\pi}\max_{t'} \Big[\,E(s_{t'},\pi(s_{t'})) - E(s,a)\,\Big] \\
&= \log \hat{\rho}^{\pi}(s,a)
   - \underbrace{\max_{\pi\in\Pi}\; \min_{t'\in\{1,\dots,T\}}
      \log \hat{\rho}^{\pi}\big(s_{t'},\pi(s_{t'})\big)}_{\displaystyle \text{related to $R_\rho$ in Eq.~(\ref{eq:R_rho})}} . \nonumber
\end{align}

By construction, $G_{\text{SAS}}$ maximizes the lowest occupancy encountered along any trajectory, 
thereby preventing the policy from collapsing into unsafe, low-density regions. 
This induces a control-invariant set:
\begin{equation*}
\mathcal{R}_G^\textnormal{SAS} 
= \bigl\{ (s_t,a_t) \,\big|\, 
   G_\textnormal{SAS}(s_t,a_t) \le -\log \tfrac{d}{C_{\max} D_\text{conc}} \bigr\}.
\end{equation*}

\paragraph{Remark.}
$G_{\text{SAS}}$ refines the offline RL model by enforcing Lyapunov stability in terms of data-supported occupancies. 
This guarantees that imagined trajectories remain within a safe invariant set without retraining or fine-tuning.
Two key implications follow from these properties:  
\textbf{(1) Conservative behavior.} Since $G_\text{SAS}$ is non-increasing along feasible trajectories, 
policies are naturally restricted to high-density regions of the offline data, 
reducing the risk of out-of-distribution actions.  
\textbf{(2) Test-time self-alignment.} 
In our framework, densities and occupancy measures are estimated from offline data and imagined rollouts 
(via transformer-generated trajectories), so $G_\text{SAS}$ serves as a prior that guides prompt selection and action choices 
toward the safe region $\mathcal{R}_G^\text{SAS}$. 
We will elaborate on how this prior enables test-time self-alignment in \cref{sec:sas}.

\subsection{Lyapunov Condition as Probabilistic Inference}\label{sec:lyapunov_as_pi}

Additional training on top of the pretrained model is computationally inefficient. 
Instead, we reformulate our Lyapunov model for safety evaluation, $G_\textnormal{SAS}(s,a)$ in \cref{eq:our_lyapunov}, as a
probabilistic inference that can be integrated into the transformer's in-context learning. 

By introducing observable indicators of the Lyapunov condition, safety verification can then be 
cast as maximizing their log-likelihood. This reformulation provides a principled criterion to 
rank and select safe \emph{imagined} trajectories without any additional retraining.
To formalize this reformulation, we first define observable indicators that encode whether the Lyapunov condition is satisfied along a trajectory.

\begin{definition}[Lyapunov-Condition Observables]\label{def:lyapunov_obs}
To make the Lyapunov condition verifiable, we introduce two indicator observables:
\begin{align*}
\mathcal{U}_t &= \1\!\left\{\,G_{\textnormal{SAS}}(s_t,a_t) > 0\,\right\}, \\
\mathcal{V}_t &= \1\!\left\{\,G_{\textnormal{SAS}}(s_t,a_t) - 
                     G_{\textnormal{SAS}}(s_{t+1},a_{t+1}) \ge 0\,\right\}.
\end{align*}
\end{definition}

Using these observables, our Lyapunov model in \cref{eq:our_lyapunov} can be shown to be equivalent to maximizing their log-likelihood, as stated in the following theorem.

\begin{theorem}\label{thm:lyapunov_gm}
With the observables in Definition~\ref{def:lyapunov_obs}, the Lyapunov formulation in
\cref{eq:our_lyapunov} is (up to an additive constant) equivalent to
\begin{equation*}
\max_{\tau}\; \frac{1}{T}\sum_{t=1}^T 
  \big[ \log P(\mathcal{U}_t{=}1 \mid \tau) 
      + \log P(\mathcal{V}_t{=}1 \mid \mathcal{U}_t{=}1,\tau) \big].
\end{equation*}
\end{theorem}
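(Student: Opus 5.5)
The plan follows the control-as-inference template of \citet{levine2018rlaspi}: give the binary observables of Definition~\ref{def:lyapunov_obs} exponential-family likelihoods, use the Markov structure of the PGM in \cref{fig:pgm1} to reduce the trajectory log-likelihood to a sum of per-step terms, and then identify that sum, up to an additive constant, with the min--max objective in \cref{eq:our_lyapunov}.

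\textbf{Step 1: likelihood model.} As with $\mathcal{O}_t$, I would define the observables through soft indicators,
\[
P(\mathcal{U}_t{=}1\mid\tau)\propto \exp\!\big(G_{\textnormal{SAS}}(s_t,a_t)\big),
\]
\[
P(\mathcal{V}_t{=}1\mid\mathcal{U}_t{=}1,\tau)\propto \exp\!\big(G_{\textnormal{SAS}}(s_t,a_t)-G_{\textnormal{SAS}}(s_{t+1},a_{t+1})\big).
\]
Here the normalizers depend only on $\mathcal{S}\times\mathcal{A}$ and not on $\tau$. Their logs are the additive constants in the statement. The hard indicators of Definition~\ref{def:lyapunov_obs} are recovered as the zero-temperature limit, so maximizing the likelihood of $\mathcal{U}_t=\mathcal{V}_t=1$ enforces positivity and descent of $G_{\textnormal{SAS}}$.

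\textbf{Step 2: factorization and telescoping.} By the chain rule and the conditional independences of the PGM, the joint log-likelihood is
\[
\log P(\mathcal{U}_{1:T}{=}1,\mathcal{V}_{1:T}{=}1\mid\tau)=\sum_{t=1}^T\big[\log P(\mathcal{U}_t{=}1\mid\tau)+\log P(\mathcal{V}_t{=}1\mid\mathcal{U}_t{=}1,\tau)\big],
\]
and the $1/T$ factor does not change the maximizer. Substituting Step~1, the $\mathcal{V}$-terms telescope to $G_{\textnormal{SAS}}(s_1,a_1)-G_{\textnormal{SAS}}(s_{T+1},a_{T+1})$, while the $\mathcal{U}$-terms give $\sum_t G_{\textnormal{SAS}}(s_t,a_t)$.

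\textbf{Step 3: matching \cref{eq:our_lyapunov}.} I would expand $G_{\textnormal{SAS}}(s_t,a_t)=\log\hat\rho^\pi(s_t,a_t)-M$, where $M=\max_\pi\min_{t'}\log\hat\rho^\pi(s_{t'},\pi(s_{t'}))$. Maximizing over $\tau$, and hence over the policy generating it with $p_1$ fixed, then selects the trajectory that keeps the log-occupancy along the path high. The goal is to show this is exactly the inner $\max_\pi\min_{t'}$, with all remaining terms absorbed into the constant.

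\textbf{Main obstacle.} This matching is the hard step, because the averaged sum is a mean over $t$ while \cref{eq:our_lyapunov} involves a minimum over $t'$. I would first try to argue that under the hard-indicator limit, any $\tau$ attaining the maximum likelihood must satisfy $\mathcal{V}_t=1$ for every $t$. Then $G_{\textnormal{SAS}}$ is non-increasing along $\tau$, so the minimum log-occupancy is attained at the terminal step and is fixed by the telescoped endpoint term. On such monotone trajectories the mean and the min orderings agree up to constants, which gives equivalence of the maximizers. If that fails, the fallback is a log-sum-exp (soft-min) relaxation of $\min_{t'}$, which yields the equivalence up to an additive $\log T$ constant.
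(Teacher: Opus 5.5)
\textbf{There is a genuine gap, in Steps 1 and 3.}

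First, your soft likelihoods do not encode the observables of \cref{def:lyapunov_obs}. With $P(\mathcal{V}_t{=}1\mid\mathcal{U}_t{=}1,\tau)\propto\exp(G_{\textnormal{SAS}}(s_t,a_t)-G_{\textnormal{SAS}}(s_{t+1},a_{t+1}))$, the product over $t$ telescopes to $\exp(G_{\textnormal{SAS}}(s_1,a_1)-G_{\textnormal{SAS}}(s_{T+1},a_{T+1}))$. A trajectory on which $G_{\textnormal{SAS}}$ increases at many intermediate steps therefore scores exactly like a monotone one with the same endpoints. Stepwise descent, which is the whole content of $\mathcal{V}_t$, is invisible to your model.

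Nor is the hard-indicator regime a limit of your model. After normalization, $\exp(G/\epsilon)$ concentrates as $\epsilon\to0$ on $\arg\max G$, not on $\{G>0\}$; recovering an indicator would need something like a sigmoid. So your ``main obstacle'' argument cannot borrow $\mathcal{V}_t=1$ for all $t$ from that limit.

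Second, and decisively, the matching in Step 3 is false. After your substitution the objective is, up to constants, $\sum_t\log\hat\rho^\pi(s_t,a_t)$ plus endpoint terms. That is an average-case criterion, whereas \cref{eq:our_lyapunov} is worst-case. Monotonicity does not reconcile them: the non-increasing log-occupancy profiles $(10,10,10,0)$ and $(1,1,1,1)$ are ranked in opposite orders by the mean (and by your full objective) and by the minimum.

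The log-sum-exp fallback does not apply either. The product factorization of Step 2 yields a sum of per-step terms, not the log of a sum of exponentials. Moreover, being within $\log T$ of the minimum is an approximation bound, not an equivalence of maximizers.

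\textbf{What the paper does instead.} The paper never tries to identify the likelihood maximizer with the inner $\max_\pi\min_{t'}$. It keeps your Step 2, absorbing the initial-state, dynamics and policy factors into the additive constant. It then chooses $p(\mathcal{U}_t{=}1\mid s_t,a_t)\propto\exp(\1[G_{\textnormal{SAS}}(s_t,a_t)>0])$ and $p(\mathcal{V}_t{=}1\mid s_t,a_t)\propto\exp(\1[G_{\textnormal{SAS}}(s_t,a_t)-G_{\textnormal{SAS}}(s_{t+1},a_{t+1})\ge0])$. The objective becomes $\sum_t\1[G_{\textnormal{SAS}}(s_t,a_t)>0]+\sum_t\1[G_{\textnormal{SAS}}(s_t,a_t)-G_{\textnormal{SAS}}(s_{t+1},a_{t+1})\ge0]$, i.e.\ the number of steps at which the positivity and descent Lyapunov conditions hold.

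The min--max of \cref{eq:our_lyapunov} enters only through the definition of $G_{\textnormal{SAS}}$, which gives $G_{\textnormal{SAS}}(s_e,a_e)=0$. ``Equivalence'' is read as: maximizing the likelihood is the same as driving $\tau$ to satisfy the Lyapunov conditions for this $G_{\textnormal{SAS}}$. If you replace your Step 1 with these indicator likelihoods and drop Step 3, you recover that argument. The mean-versus-min issue then disappears, because only the sign pattern of $G_{\textnormal{SAS}}$ is used.

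That sign pattern is in fact where the min--max lives. For any $\tau$ that is a candidate in the outer maximization, $\min_t G_{\textnormal{SAS}}(s_t,a_t)=\min_t\log\hat\rho^\pi(s_t,a_t)-\max_{\pi}\min_{t'}\log\hat\rho^\pi\le 0$, with equality exactly at max--min optimal trajectories. Averaging the magnitudes of $G_{\textnormal{SAS}}$ loses this information.
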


Here, $T$ denotes the trajectory horizon and $\tau$ an imagined trajectory generated by the pretrained model. 
The full proof is given in \cref{sec:proof}. 

This theorem establishes that the Lyapunov condition can be reformulated as a probabilistic inference objective,
which allows safety to be assessed and enforced directly on imagined trajectories.
To compute $G_{\textnormal{SAS}}$ while simultaneously selecting an optimal trajectory,
we relax the problem into the iterative imagination-based procedure described in \cref{alg:main} of \cref{sec:sas}.

\section{LYAPUNOV-GUIDED SELF-ALIGNMENT}\label{sec:sas}
In \cref{sec:from_offline_rl}, we introduced the Lyapunov condition as a criterion for 
control-invariant sets in offline RL. 
Here, we ask: how can this condition be used to realign a pretrained agent 
\emph{without retraining}? 
Our key idea is to view Lyapunov evaluation as an inference problem over imagined trajectories, 
so that a pretrained model can be guided toward safer behavior 
via self-generated prompts.
We consider two complementary axes:
\textbf{(1) Occupancy-based evaluation:} filtering trajectories with low estimated density, 
thereby mitigating error accumulation along the \emph{value horizon}. 
\textbf{(2) Trajectory-based prompting:} Consider the decay of $G_\text{SAS}$ as a binary observable, 
and recycling safe segments as in-context prompts to shorten the \emph{policy horizon.}  
Together, these axes constrain imagined rollouts and enable prompt-based alignment.

\subsection{Self-alignment via in-context learning}
Our procedure, \textbf{SAS (Self-Alignment for Safety)}, integrates Lyapunov filtering 
with transformer imagination through three steps:
(1) evaluate imagined rollouts with $G_{\textnormal{SAS}}(s,a)$,  
(2) select safe fragments within $\mathcal{R}_G^{\textnormal{SAS}}$,  
(3) insert them back as prompts.  
This jointly reduces value and policy horizons and enables safe test-time adaptation 
\emph{without parameter updates}. 
We next formalize the inference view underlying SAS.

When aligning a transformer-based model to a downstream task,
the model adapts by conditioning on a prompt composed of demonstration examples.
This capability, known as \emph{in-context learning}, 
can be interpreted as implicit Bayesian inference over latent concepts \citep{xie2021explanation}.
Formally, the posterior predictive distribution $p(\texttt{output} \mid \texttt{prompt})$ is
\begin{align*}
    \int p(\texttt{output} \mid \texttt{prompt}, \theta)\, p(\theta \mid \texttt{prompt})\, d\theta,
\end{align*}
where $\theta$ denotes a latent concept that parameterizes the hidden Markov model 
$p(\texttt{output} \mid \texttt{prompt}, \theta)$. 
Intuitively, conditioning on a prompt selects a particular $\theta$, 
so that $p(\texttt{output} \mid \texttt{prompt},\theta)$ generates aligned behavior.
In our framework, we draw an analogy between this latent concept $\theta$ 
and the parameter of a high-level policy in hierarchical RL,
with the low-level policy handling primitive actions.
This perspective motivates the probabilistic formulation of SAS in the following.

\subsection{In-context Bayesian Inference for Prompt-based Policy Extraction}
\label{sec:hrl}

We now show how the policy that generates prompts in our self-alignment framework 
can be identified within the transformer through an in-context Bayesian inference view. 
This enables us to connect prompt-based self-alignment with Bayesian inference 
and to derive probabilistic guarantees on the resulting trajectories.

Here, unlike in the LLM setting where $\theta$ denotes a latent concept,  
we interpret $\theta$ as the parameter of the high-level policy  
$\pi^{\textnormal{high}}_\theta$ in our hierarchical RL formulation.  
At test time, the first latent skill $\mathbf{z}_1^{\textnormal{test}}$ is sampled 
along with a trajectory $\tau$ starting from the initial state 
$\mathbf{s}_1^{\textnormal{test}}$, conditioned on a prompt 
$\mathbf{p}_{1:K}$, which is a $K$-length demonstration prefix of
state–action pairs $(\mathbf{s}_{-K+1}, \mathbf{a}_{-K+1}, \dots, \mathbf{s}_0, \mathbf{a}_0)$ 
(see Figure~\ref{fig:pgm1}).  
Intuitively, such a prompt selects a feasible high-level policy 
$\pi^{\textnormal{high}}_\theta$ from the space implicitly encoded in the offline dataset $\mathcal{D}$.
Our goal is to recover the safe high-level policy $\pi^{\textnormal{high}}_{\theta^*}$ 
consistent with the demonstration prompt.  
To formalize this inference, we introduce the Lyapunov observables $\mathcal{U}_t$ and $\mathcal{V}_t$ 
from \cref{sec:lyapunov_as_pi} and define the combined optimality variable
\begin{equation}
\mathcal{O}_t := \mathcal{U}_t \wedge \mathcal{V}_t ,
\end{equation}
so that $\mathcal{O}_t=1$ indicates both Lyapunov conditions hold at time $t$.  
The inference distribution over trajectories that satisfy these conditions is
For brevity, we drop the superscript $^{\textnormal{test}}$ on 
test-time variables in the following.
\begin{align}\label{eq:in_context_predictor_uvc}
& p(\mathcal{O}_{\textnormal{traj}}\!=\!1 \mid \mathbf{p}_{1:K}, \mathbf{s}_1) \nonumber \\
&= \int_{\theta} \sum_{z_1 \in \mathcal{Z}} 
   g_{\pi_\theta}(\tau \mid z_1, s_1) \, 
   p(z_1 \mid \mathbf{p}_{1:K}, s_1, \theta) \nonumber\\
&\quad \times \prod_{t=1}^{T} p(\mathcal{O}_t \mid s_t, a_t) \,
   e^{K r_K(\theta)} \, p(\theta) \, d\theta.
\end{align}


where the rollout likelihood under $\pi_\theta$ is defined as
\begin{equation*}
\begin{aligned}
&g_{\pi_\theta}(\tau \mid z_1^{\textnormal{test}}, s_1^{\textnormal{test}})
= \prod_{t=1}^{T} \Big[ \hat{\mathcal{P}}(s_{t+1}^{\textnormal{test}} \mid s_t^{\textnormal{test}}, a_t^{\textnormal{test}}) \\
&\;\;\times \pi_\phi^{\textnormal{low}}(a_t^{\textnormal{test}} \mid s_t^{\textnormal{test}}, z_t^{\textnormal{test}})
\, \pi_\theta^{\textnormal{high}}(z_t^{\textnormal{test}} \mid s_t^{\textnormal{test}}, z_{t-1}^{\textnormal{test}}) \Big]
\end{aligned}
\end{equation*}
(with $s_1^{\textnormal{test}}$ suppressed in subsequent uses for brevity), and
\begin{equation}
r_K(\theta) = \tfrac{1}{K}\log
\frac{p(\mathbf{p}_{1:K},\; s_1^{\textnormal{test}} \mid \theta)}
     {p(\mathbf{p}_{1:K},\; s_1^{\textnormal{test}} \mid \theta^*)}
\end{equation}
measures the compatibility between the prompt $\mathbf{p}_{1:K}$ and $\theta$.  
Here, $\theta^*$ is for the high-level policy from which the prompt $\mathbf{p}_{1:K}$ was generated, 
serving as the safe reference policy in the denominator of $r_K(\theta)$.
A trajectory that satisfies \cref{thm:lyapunov_gm} is equivalent to the one induced by the safe reference policy $\pi^{\textnormal{high}}_{\theta^*}$, which maximizes the likelihood of $\mathcal{O}_{\textnormal{traj}}=1$.

This formulation extends the implicit Bayesian inference view 
\citep{xie2021explanation} in two ways: 
(i) it explicitly incorporates the low-level policy 
$\pi_\phi^{\textnormal{low}}$ to model the action space in RL, and 
(ii) it leverages the predictive transition model 
$\hat{\mathcal{P}}(s_{t+1}\!\mid s_t,a_t)$ to generate imagined trajectories consistent with the environment.
To connect the inference formulation with Lyapunov-based safety, 
we now establish a probabilistic guarantee for the trajectory 
returned by Algorithm~\ref{alg:main}.

\begin{assumption}\label{asm:bounded_diff}
The Lyapunov difference $\lVert G_\textnormal{SAS}(s_t,a_t) - G_\textnormal{SAS}(s_{t+1},a_{t+1})\rVert$ is bounded by a constant $L>0$ for all transitions.
\end{assumption}

\begin{theorem}\label{thm:control_invariant}
Under Assumption~\ref{asm:bounded_diff}, for $\tau_{\textnormal{best}}$ returned by Algorithm~\ref{alg:main},
the probability of leaving $\mathcal{R}_G^\textnormal{SAS}$ is bounded by
\begin{align}\label{eq:control_invariant}
P\!\left[\tau_\textnormal{best} \not\subset 
   \mathcal{R}_G^\textnormal{SAS}\right]
&\;\le\;
\left(\frac{\mathbb{E}_{(s,a)\sim \mathcal{D}}
       \big[-\log \hat{\rho}(s,a)\big]}{c_2}\right)^{NT} \nonumber \\
&\quad+ \exp\!\left(-\tfrac{2M\kappa^2(c_2-c_1)^2}{TL^2}\right).
\end{align}
\end{theorem}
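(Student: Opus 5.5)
The plan is to split the failure event according to two thresholds $0<c_1<c_2$, and bound each piece with a different tool. The first piece gives the Markov-type power term; the second gives the Hoeffding-type exponential term. Two features of Algorithm~\ref{alg:main} are used: it draws $N$ imagined trajectories of horizon $T$, and it keeps $\tau_{\textnormal{best}}$ by maximizing the Lyapunov log-likelihood of \cref{thm:lyapunov_gm} over $M$ candidate segments or iterations. The parameter $\kappa$ is read as the fraction of steps at which the observable $\mathcal{V}_t$ is tested.

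Write $B$ for the event that every imagined state--action pair produced by the search has energy $E(s,a)=-\log\hat\rho(s,a)$ at least $c_2$, i.e.\ no low-energy (high-density) pair is ever found. Then
\[
P[\tau_{\textnormal{best}}\not\subset\mathcal{R}_G^{\textnormal{SAS}}]\le P[B]+P[\tau_{\textnormal{best}}\not\subset\mathcal{R}_G^{\textnormal{SAS}},\,B^c].
\]

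\textbf{Bounding $P[B]$.} The energy is nonnegative, so Markov's inequality gives $P[E(s,a)\ge c_2]\le \mathbb{E}[E]/c_2$ for a single pair. The concentrability bound $\hat\rho^{\pi}\le D_{\text{conc}}\rho_{\text{data}}$ is used to replace the imagined-pair expectation by $\mathbb{E}_{(s,a)\sim\mathcal{D}}[-\log\hat\rho(s,a)]$, with the constant absorbed. Treating the $NT$ imagined pairs as independent draws (conditionally on the prompt, as in the Bayesian inference view of \cref{eq:in_context_predictor_uvc}), we multiply to get the first term $\bigl(\mathbb{E}_{\mathcal{D}}[-\log\hat\rho]/c_2\bigr)^{NT}$.

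\textbf{Bounding the second piece.} On $B^c$ the search has a starting point with $G_{\textnormal{SAS}}$ value below threshold, by the decomposition in \cref{eq:our_lyapunov}. Leaving $\mathcal{R}_G^{\textnormal{SAS}}$ then requires the cumulative Lyapunov increments $\sum_t[G_{\textnormal{SAS}}(s_{t+1},a_{t+1})-G_{\textnormal{SAS}}(s_t,a_t)]$ to exceed the margin $c_2-c_1$. The selected trajectory maximizes the $\mathcal{V}_t$ likelihood, so the expected increments are nonpositive. Each increment is bounded by $L$ by Assumption~\ref{asm:bounded_diff}. Applying Hoeffding's (or Azuma's) inequality to the average of the $M\kappa$ effectively tested increments over horizon $T$ gives a deviation probability of $\exp(-2M\kappa^2(c_2-c_1)^2/(TL^2))$. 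Summing the two pieces yields \eqref{eq:control_invariant}.

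\textbf{Main obstacle.} The hard step is justifying the independence and martingale structure. Imagined rollouts share the prompt and the model $\hat{\mathcal P}$, and the selection of $\tau_{\textnormal{best}}$ introduces dependence. I would first try to condition on $\theta$ and the prompt, so that the rollouts are conditionally i.i.d. For the Hoeffding step I would use a martingale difference sequence adapted to the rollout filtration, with the supermartingale property supplied by Lyapunov condition (3). The exact constant $\kappa$ may need to be defined precisely to match the normalization.
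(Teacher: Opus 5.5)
Your decomposition and first term agree with the paper. The bad event is split into ``all $NT$ imagined pairs have $E\ge c_2$'' plus a Lyapunov-descent failure, and Markov's inequality with independence of the pairs gives the $NT$-th power. Do not route this through concentrability, though. The paper simply assumes the pairs are i.i.d.\ draws from $\mathcal{D}$. Your change of measure would leave a factor $D_{\text{conc}}^{NT}$ that cannot be absorbed into the stated bound, and the concentrability inequality is only stated for $\hat\rho^{\pi^*}$, not for the rollout policy.

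The genuine gap is the second term, specifically where the factor $M$ in the exponent comes from. You concentrate the increments of the \emph{selected} trajectory (or increments pooled across candidates). But $\tau_{\textnormal{best}}$ is chosen by $j^*=\arg\max_j v_j$, so its increments are not a supermartingale. ``It maximizes the $\mathcal{V}_t$ likelihood, so expected increments are nonpositive'' does not follow, and appealing to Lyapunov condition (3) assumes what is to be proved. A Hoeffding bound over ``$M\kappa$ tested increments'' also does not reproduce $\exp(-2M\kappa^2(c_2-c_1)^2/(TL^2))$ without an ad hoc threshold, since the event concerns the $T$ steps of one trajectory.

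The paper obtains $M$ differently, in three steps:
\begin{enumerate}
\item Reduce to a count. Steps with $\mathcal{V}_t=1$ have nonpositive increment and every increment is at most $L$, so escaping requires at least $\kappa(c_2-c_1)/L$ steps with $\mathcal{V}_t\neq 1$. Here $\kappa$ is a scaling constant for the escape distance, not a fraction of tested steps.
\item Apply Hoeffding to the $T$ indicators $\1(\mathcal{V}_t\neq 1)\in[0,1]$ of a single, \emph{unselected} candidate. Treating their mean as negligible, as the paper implicitly does, this gives $\exp(-2\kappa^2(c_2-c_1)^2/(TL^2))$.
\item Use the selection rule. $\tau_{\textnormal{best}}$ has the fewest violations among the $M$ independently re-imagined rollouts, so it has at least $\kappa(c_2-c_1)/L$ violations only if all $M$ candidates do. Independence then yields the $M$-th power.
\end{enumerate}
Passing to the violation count is what makes this work. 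Your real-valued sum of increments is not the statistic the algorithm ranks by, so ``$\tau_{\textnormal{best}}$ escapes'' would not imply ``every candidate escapes.''
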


The proof is in \cref{proof:hoeffding}.
Here, $c_1,c_2>0$ are constants defining the control-invariant set and $\kappa$ is a scaling constant from
Hoeffding’s inequality.
As the iterations $N,M \to \infty$, the RHS of
\cref{eq:control_invariant} vanishes, implying that
$\tau_\textnormal{best}$ remains in
$\mathcal{R}_G^\textnormal{SAS}$ with high probability.

\paragraph{Practical interpretation.}
The first term decays exponentially in $N \!\times\! T$: increasing the number of imagined rollouts ($N$) or the trajectory length ($T$) tightens the energy-based feasibility check (condition $\mathcal{U}_t$).
The second term decays exponentially in $M$: more re-imagination trials strengthen the Lyapunov-descent check (condition $\mathcal{V}_t$).
Together, the bound confirms that SAS is not merely a heuristic but admits a formal probabilistic safety guarantee whose tightness improves with moderate computational budget (see ablations in \cref{sec:append_abal}).


\begin{algorithm}[t]
\caption{\textbf{S}elf \textbf{A}lignment for \textbf{S}afety (SAS): Prompt generation for safe test-time adaptation}
\label{alg:main}
\begin{algorithmic}[1]
\Require Pretrained transformer from $\mathcal{D}$; test initial state $\mathbf{s}_1^{\textnormal{test}}$; prompt length $K$
\For{$i=1,2,\dots,N$} \Comment{\textcolor{blue}{loop for condition $\mathcal{U}_t$}}
    \State Imagine trajectory $\tau^{(i)} \sim p(\tau \mid \mathbf{s}_1^{\textnormal{test}})$
    \State Compute $E_t$ for all $t$ in $\tau^{(i)}$
    \State $\hat E_i,\ t_i \gets \max_t E_t,\ \arg\max_t E_t$
\EndFor
\State $i^* \gets \arg\min_i \hat E_i$ 
\State $\hat{\mathbf{p}}_{1:K} \gets \tau^{(i^*)}[t_{i^*}{-}K{+}1: t_{i^*}]$ 
\Statex
\For{$j=1,2,\dots,M$} \Comment{\textcolor{blue}{loop for condition $\mathcal{V}_t$}}
    \State Imagine trajectory $\tau^{(j)} \sim p(\tau \mid \hat{\mathbf{p}}_{1:K},\, \mathbf{s}_1^{\textnormal{test}})$
    \State Compute $E_t$ for all $t$ in $\tau^{(j)}$
    \State $t_j \gets \arg\max_t E_t$
    \State $v_j \gets \sum_t \mathcal{V}_t$ 
\EndFor
\State $j^* \gets \arg\max_j v_j$ 
\State $\mathbf{p}_{1:K} \gets \tau^{(j^*)}[t_{j^*}{-}K{+}1 : t_{j^*}]$ 
\State \Return $\mathbf{p}_{1:K}$ and deploy $\pi_{\theta^*}^{\mathrm{high}}$ by conditioning on $(\mathbf{p}_{1:K}, \mathbf{s}_1^{\textnormal{test}})$ 
\end{algorithmic}
\end{algorithm}

\begin{figure*}[t]
    \centering
    \includegraphics[width=0.8\linewidth]{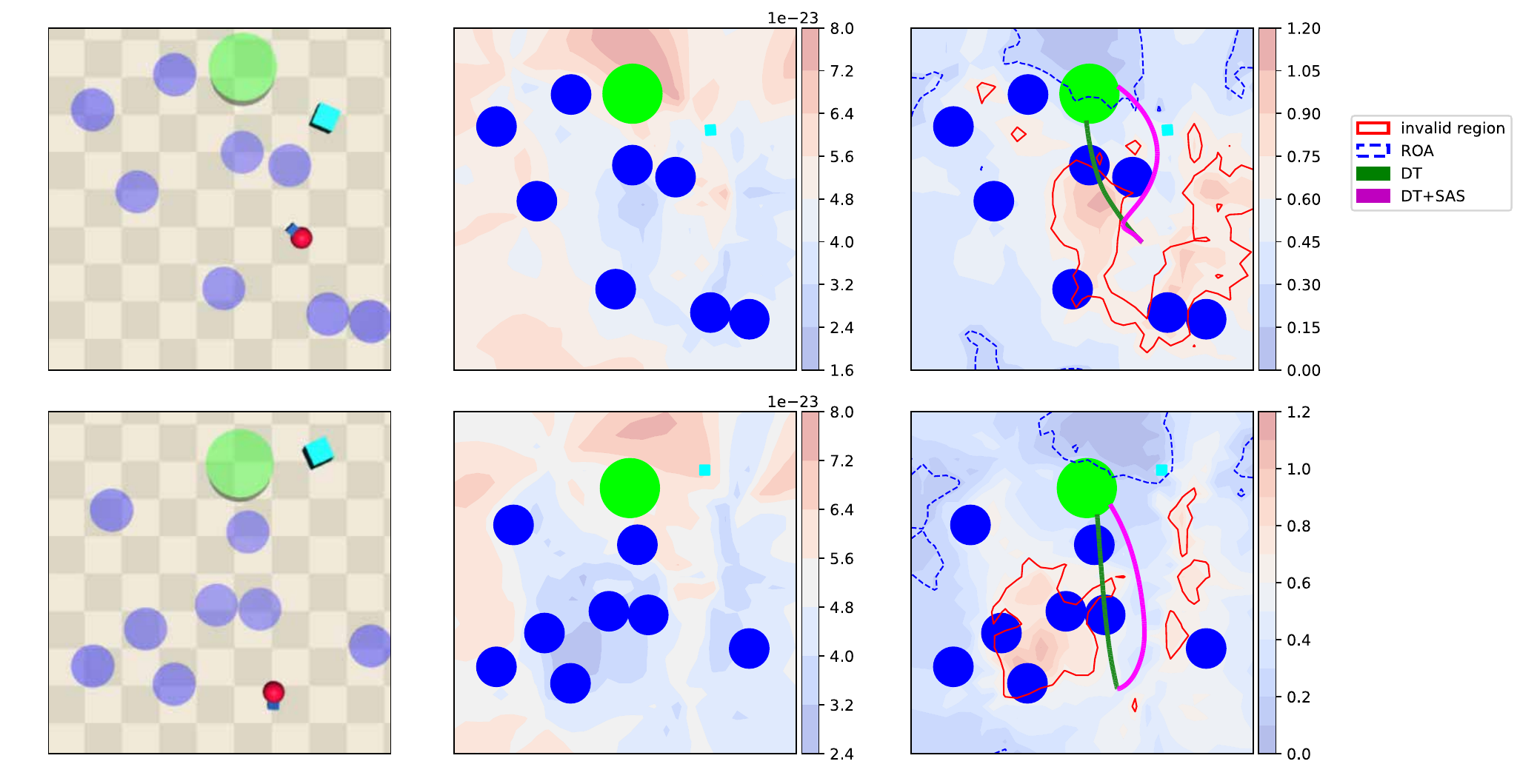}
    \vspace{-0.1in}
    \caption{\textbf{SAS dodges hazard better.}
\textbf{Left:} env. with fixed \textbf{\textcolor{blue}{hazards}}, a movable \textbf{\textcolor{cyan}{obstacle}}, and a \textbf{\textcolor{green}{goal}}. 
\textbf{Middle:} The state density from action-averaged occupancy measure.
\textbf{Right:} Trajectories with and without SAS, overlaid on $G_\textnormal{SAS}$ landscape. 
Blue regions denote control-invariant set $\mathcal{R}^\text{SAS}_G$, red lines indicate unsafe areas.
}
    \label{fig:vis_pointgoal}
\vspace{-0.2in}
\end{figure*}

\subsection{Safe-Alignment for Safety (SAS)}

Algorithm~\ref{alg:main} shows our Lyapunov-guided self-alignment 
procedure.
First, from a fixed test initial state, we construct an initial prompt 
$\hat{\mathbf{p}}_{1:K}$ using a conservative min--max rule on the energy 
$E(s,a)=-\log \hat\rho^{\pi}(s,a)$ for condition $\mathcal{U}_t$. 
Second, conditioned on this prompt, we re-imagine trajectories and select the one 
maximizing the cumulation $\sum_t \mathcal{V}_t$. 
The resulting $\mathbf{p}_{1:K}$, combined with the initial state, defines safe test-time behavior.

In \cref{alg:main}, lines~1--6 choose $i^{*}=\arg\min_i \max_t E_t$ among rollouts 
from $\mathbf{s}_1^{\textnormal{test}}$ and set the initial prompt as the last $K$ pairs at $t_{i^{*}}$ (line~7).
Lines~9--13 then evaluate $v_j=\sum_t \mathcal{V}_t$ for each conditioned rollout. 
Lines~14--15 pick $j^{*}=\arg\max_j v_j$ and form the final prompt from the $K$ pairs at $t_{j^{*}}$.
Finally, line~16 conditions the transformer on 
$(\mathbf{p}_{1:K}, \mathbf{s}_1^{\textnormal{test}})$ to yield the safe policy.

\section{EXPERIMENTS}\label{sec:exp}

In this section, we empirically evaluate the performance of SAS,
comparing it to vanilla offline RL, prior offline safe RL, and other test-time adaptation methods in the \texttt{MuJoCo} \citep{1606.01540}, \texttt{Safety Gymnasium} \citep{ji2023safetygymnasium}, and \texttt{Bullet-Safety-Gym} \citep{Gronauer2022BulletSafetyGym} benchmarks.
We also present analyses and ablation on the policy extraction under self-alignment.

\subsection{Experimental Setup}
We employ the D4RL dataset \citep{fu2021d4rl} for \texttt{MuJoCo} and the DSRL dataset \citep{liu2023datasets} for \texttt{Safety Gymnasium}. Both reward and cost returns are normalized for comparability across tasks.
The methods denoted by \textit{SAS} correspond to our self-alignment method. 
Throughout, \emph{‘+SAS’} denotes our Lyapunov-guided self-alignment applied to a pretrained backbone (DT or CDT) that includes our VAE-augmented world model for imagination. We do not report a standalone \emph{‘SAS’} without a backbone; all our results are backbone+SAS.
For brevity, task names are abbreviated, e.g., \texttt{PointGoal1} (PG1), \texttt{PointPush1} (PP1), and \texttt{CarButton2} (CB2). Detailed experimental settings are provided in \cref{appendix:intro} due to space limitations.

\subsection{Results and Discussions}
We present our experimental results and corresponding insights in a Q\&A style. 
Although referenced in the discussion for clarification, figures and tables not included in the main text are provided in \cref{sec:complete_exp_results}.

\begin{table*}[!tb]
\caption{Full results in \texttt{Safety Gymnasium}. Values are averaged over three cost thresholds, 20 evaluation episodes, and three random seeds. \tgr{Gray}: unsafe agents. \textbf{Bold}: safe agents with normalized cost less than 1. \tb{\textbf{Blue}}: safe agents achieving the highest reward.}
\label{tab:cdt}
\vspace{-0.1in}

\centering

   \resizebox{\textwidth}{!}{%
{\renewcommand{\arraystretch}{1.2}

\begin{tabular}{|cl|cc|cc|cc|cc|cc|cc|cc|cc|cc|cc|}
\hline
\multicolumn{2}{|c|}{}                       & \multicolumn{2}{c|}{\textbf{DT + SAS}}                                              & \multicolumn{2}{c|}{\textbf{CDT + SAS}}                                             & \multicolumn{2}{c|}{CDT}                                                   & \multicolumn{2}{c|}{BC-All}                                & \multicolumn{2}{c|}{BC-Safe}                                                & \multicolumn{2}{c|}{BCQ-Lag}                                                & \multicolumn{2}{c|}{BEAR-Lag}                              & \multicolumn{2}{c|}{CPQ}                                   & \multicolumn{2}{c|}{COptiDICE}     & \multicolumn{2}{c|}{DCRL}                        \\ \cline{3-22} 
\multicolumn{2}{|c|}{\multirow{-2}{*}{Task}} & reward                               & cost                                 & reward                               & cost                                 & reward                               & cost                                & reward                       & cost                        & reward                               & cost                                 & reward                               & cost                                 & reward                       & cost                        & reward                      & cost                         & reward                       & cost             & reward                       & cost           \\ \hline
\multicolumn{2}{|c|}{PointGoal1}             & \tgr{0.66}          & \tgr{1.19}          & \tgr{0.65}          & \tgr{1.27}          & \tgr{0.69}          & \tgr{1.12}         & \textbf{0.65}                & \textbf{0.95}               & \textbf{0.43}                        & \textbf{0.54}                        & \tb{\textbf{0.71}} & \tb{\textbf{0.98}} & \tgr{0.74}  & \tgr{1.18} & \textbf{0.57}               & \textbf{0.35}                & \tgr{0.49}  & \tgr{1.66}        & \textbf{0.24}  & \textbf{0.86}\\
\multicolumn{2}{|c|}{PointGoal2}             & \tgr{0.65}          & \tgr{1.78}          & \tb{\textbf{0.52}} & \tb{\textbf{0.94}} & \tgr{0.59}          & \tgr{1.34}         & \tgr{0.54}  & \tgr{1.97} & \textbf{0.29}                        & \textbf{0.78}                        & \tgr{0.67}          & \tgr{3.18}          & \tgr{0.67}  & \tgr{3.11} & \tgr{0.4}  & \tgr{1.31}  & \tgr{0.38}  & \tgr{1.92} & \textbf{0.28}  & \textbf{0.26}\\
\multicolumn{2}{|c|}{PointPush1}             & \textbf{0.28}                        & \textbf{0.62}                        & \textbf{0.26}                        & \textbf{0.54}                        & \textbf{0.24}                        & \textbf{0.48}                       & \textbf{0.19}                & \textbf{0.61}               & \textbf{0.13}                        & \textbf{0.43}                        & \tb{\textbf{0.33}} & \tb{\textbf{0.86}} & \textbf{0.22}                & \textbf{0.79}               & \textbf{0.2}                & \textbf{0.83}                & \textbf{0.13}                & \textbf{0.83}      & \textbf{0.01}  & \textbf{0.52}         \\
\multicolumn{2}{|c|}{PointPush2}             & \tb{\textbf{0.24}} & \tb{\textbf{0.64}} & \textbf{0.20}                        & \textbf{0.53}                        & \textbf{0.21}                        & \textbf{0.65}                       & \textbf{0.18}                & \textbf{0.91}               & \textbf{0.11}                        & \textbf{0.8}                         & \textbf{0.23}                        & \textbf{0.99}                        & \tgr{0.16}  & \tgr{0.89} & \tgr{0.11} & \tgr{1.04}  & \tgr{0.02}  & \tgr{1.18} & \textbf{0.02}  & \textbf{0.07}\\
\multicolumn{2}{|c|}{PointButton1}           & \tgr{0.49}          & \tgr{1.38}          & \tgr{0.51}          & \tgr{1.27}          & \tgr{0.5}           & \tgr{1.68}         & \tgr{0.1}   & \tgr{10.5} & \tb{\textbf{0.06}} & \tb{\textbf{0.52}} & \tgr{0.24}          & \tgr{1.73}          & \tgr{0.2}   & \tgr{1.6}  & \tgr{0.69} & \tgr{3.2}   & \tgr{0.13}  & \tgr{1.4}  & \textbf{0.01}  & \textbf{0.48}\\
\multicolumn{2}{|c|}{PointButton2}           & \tgr{0.51}          & \tgr{1.14}          & \tb{\textbf{0.41}} & \tb{\textbf{0.98}} & \tgr{0.46}          & \tgr{1.57}         & \tgr{0.27}  & \tgr{2.02} & \tgr{0.16}          & \tgr{1.1}           & \tgr{0.4}           & \tgr{2.66}          & \tgr{0.43}  & \tgr{2.47} & \tgr{0.58} & \tgr{4.3}   & \tgr{0.15}  & \tgr{1.51} & \textbf{0.18}  & \textbf{0.64}\\
\multicolumn{2}{|c|}{CarGoal1}               & \tb{\textbf{0.67}} & \tb{\textbf{0.85}} & \textbf{0.65}                        & \textbf{0.90}                        & \tgr{0.66}          & \tgr{1.21}         & \textbf{0.39}                & \textbf{0.33}               & \textbf{0.24}                        & \textbf{0.28}                        & \textbf{0.47}                        & \textbf{0.78}                        & \tgr{0.61}  & \tgr{1.13} & \tgr{0.79} & \tgr{1.42}  & \textbf{0.35}                & \textbf{0.54}     & \textbf{0.35}  & \textbf{0.88}          \\
\multicolumn{2}{|c|}{CarGoal2}               & \tgr{0.48}          & \tgr{1.15}          & \tb{\textbf{0.42}} & \tb{\textbf{0.98}} & \tgr{0.48}          & \tgr{1.25}         & \tgr{0.23}  & \tgr{1.05} & \textbf{0.14}                        & \textbf{0.51}                        & \tgr{0.3}           & \tgr{1.44}          & \tgr{0.28}  & \tgr{1.01} & \tgr{0.65} & \tgr{3.75}  & \textbf{0.25}                & \textbf{0.91}       & \tgr{0.11}  & \tgr{2.51}        \\
\multicolumn{2}{|c|}{CarPush1}               & \tb{\textbf{0.31}} & \tb{\textbf{0.51}} & \tb{\textbf{0.31}} & \tb{\textbf{0.49}} & \tb{\textbf{0.31}} & \tb{\textbf{0.4}} & \textbf{0.22}                & \textbf{0.36}               & \textbf{0.14}                        & \textbf{0.33}                        & \textbf{0.23}                        & \textbf{0.43}                        & \textbf{0.21}                & \textbf{0.54}               & \textbf{-0.03}              & \textbf{0.95}                & \textbf{0.23}                & \textbf{0.5}        & \textbf{-0.1}  & \textbf{0.09}        \\
\multicolumn{2}{|c|}{CarPush2}               & \tgr{0.22}          & \tgr{1.16}          & \tb{\textbf{0.21}} & \tb{\textbf{0.75}} & \tgr{0.19}          & \tgr{1.3}          & \textbf{0.14}                & \textbf{0.9}                & \textbf{0.05}                        & \textbf{0.45}                        & \tgr{0.15}          & \tgr{1.38}          & \tgr{0.1}   & \tgr{1.2}  & \tgr{0.24} & \tgr{4.25}  & \tgr{0.09}  & \tgr{1.07} & \textbf{-0.13}  & \textbf{0.17}\\
\multicolumn{2}{|c|}{CarButton1}             & \tgr{0.17}          & \tgr{1.08}          & \tb{\textbf{0.27}} & \tb{\textbf{0.98}} & \tgr{0.21}          & \tgr{1.6}          & \tgr{0.03}  & \tgr{1.38} & \textbf{0.07}                        & \textbf{0.85}                        & \tgr{0.04}          & \tgr{1.63}          & \tgr{0.18}  & \tgr{2.72} & \tgr{0.42} & \tgr{9.66}  & \tgr{-0.08} & \tgr{1.68}& \textbf{0.12}  & \textbf{0.95} \\
\multicolumn{2}{|c|}{CarButton2}             & \tb{\textbf{0.14}} & \tb{\textbf{0.84}} & \tgr{0.30}          & \tgr{1.11}          & \tgr{0.13}          & \tgr{1.58}         & \tgr{-0.13} & \tgr{1.24} & \textbf{-0.01}                       & \textbf{0.63}                        & \tgr{0.06}          & \tgr{2.13}          & \tgr{-0.01} & \tgr{2.29} & \tgr{0.37} & \tgr{12.51} & \tgr{-0.07} & \tgr{1.59} & \tgr{0.09}  & \tgr{1.42}\\ \hline
\multicolumn{2}{|c|}{Average}               
& 0.402 & 1.03    
& 0.392 & \textbf{0.895}  
& 0.389 & 1.18    
& 0.234 & 1.85    
& 0.151 & \textbf{0.602}  
& 0.319 & 1.52    
& 0.316 & 1.58    
& 0.416 & 3.63    
& 0.172 & 1.23    
& 0.098 & \textbf{0.737}  
\\ \hline
\end{tabular}%
}
}
\vspace{-0.1in}
\end{table*}

\paragraph{Q1: Why is SAS safer at test time?}\mbox{}\\

\cref{fig:vis_pointgoal} illustrates how SAS guides safer behavior in the \texttt{PointGoal1-v0} task from \texttt{Safety Gymnasium} benchmark~\citep{ji2023safetygymnasium}.
As shown in the left column, the agent (\textcolor{red}{\CIRCLE}) is initialized at a random position with a random orientation. 
In the right column, the resulting trajectories illustrate that \textcolor[rgb]{1,0,1}{DT+SAS}, unlike \textcolor[rgb]{0,0.5,0}{DT}, reaches the goal (\textcolor{green}{\CIRCLE}) without entering any hazard (\textcolor{blue}{\CIRCLE}), thus ensuring a safe trajectory toward the target.

We define the \emph{action-averaged occupancy measure} as the average of $\hat{\rho}^\pi(s,a)$ over actions $\pm x, \pm y$ at each state, since actions represent the intended movement direction of the agent. 
Similarly, $G_\textnormal{SAS}(\mathbf{s}_t, \mathbf{a}_t)$ is averaged in the same manner. 
The \emph{invalid region} corresponds to the area where $G_\textnormal{SAS}$ exceeds its 95th percentile, while the \emph{region of attraction (ROA)} denotes a tight control-invariant set defined by the 10th percentile of $G_\textnormal{SAS}$, which tends to concentrate near the goal (\textcolor{green}{\CIRCLE}).

The safer behavior of \textcolor[rgb]{1,0,1}{DT+SAS} can be explained by comparing the action-averaged occupancy measure in the middle column with $G_\textnormal{SAS}$ in the right column. 
While $G_\textnormal{SAS}$ assigns high values to directions leading toward hazard-dense areas (\textcolor{blue}{\CIRCLE}), thereby forming invalid regions, the action-averaged occupancy measure yields more blurry boundaries. 
Consequently, \textcolor[rgb]{0,0.5,0}{DT}, having limited foresight, often enters such regions, ultimately incurring costs by stepping into hazards.

Although neither \textcolor[rgb]{0,0.5,0}{DT} nor \textcolor[rgb]{1,0,1}{DT+SAS} explicitly uses cost, both are trained on expert data from the offline safe RL dataset DSRL, where agents learn to avoid hazards and reach the goal. 
However, the occupancy measure $\hat{\rho}^\pi$ of \textcolor[rgb]{0,0.5,0}{DT} exhibits myopic behavior in long-horizon evaluation, deeming current actions as probable without sufficient consideration of future risks. 
By contrast, Lyapunov-guided self-alignment in SAS refines the policy at test time, enabling deployment of an agent that is both safer and more aligned with the offline distribution.

\paragraph{Q2: Does SAS consistently improve?}\mbox{}\\

\begin{center}
\captionsetup{type=table} 
\captionof{table}{Performance of DT and DT+SAS in \texttt{MuJoCo}.}
\label{tab:mujoco}
 \resizebox{\linewidth}{!}{%
\begin{tabular}{|cc|cc|cc|cc|}
\hline
\multicolumn{2}{|c|}{}                                    & \multicolumn{2}{c|}{Hopper}                   & \multicolumn{2}{c|}{Walker2d}                               & \multicolumn{2}{c|}{Humanoid}  \\ \cline{3-8} 
\multicolumn{2}{|c|}{\multirow{-2}{*}{Environment}}       & expert         & medium                       & expert                       & medium                       & expert        & medium         \\ \hline
\multicolumn{1}{|c|}{}                          & reward  & 110.7          & 86.6 & 107.7 & 82.2 & 98.5 & 40.5          \\
\multicolumn{1}{|c|}{\multirow{-2}{*}{DT}}      & failure & 0.05           & 1     & 0     & 0.54 & 0.20  & 0.97          \\ \hline
\multicolumn{1}{|c|}{}                          & reward  & 110.7          & \textbf{87.5}                & 107.7 & \textbf{89.5}               & \textbf{103.5}          & \textbf{50.6} \\
\multicolumn{1}{|c|}{\multirow{-2}{*}{\textbf{DT+SAS}}} & failure & \textbf{0.03} & 1     & 0     & \textbf{0.46}               & \textbf{0.10}          & \textbf{0.87} \\ \hline
\end{tabular}%
}
\end{center}

First, \cref{tab:mujoco} and \cref{tab:cql_ape-v}, whose rewards are normalized following \citep{kumar2020conservative}, demonstrate that SAS yields notable 
improvements in both reward (return) and failure rate (safety) even in general 
offline RL without explicit cost annotations, such as D4RL. 
Here, the failure rate is defined as the \emph{early termination} event in 
\texttt{MuJoCo}, where an episode ends prematurely due to exceeding internal 
unhealthy cost thresholds before reaching the maximum episode length.

As shown in \cref{tab:sas_modified}, SAS achieves substantially lower costs in 
\texttt{Safety Gymnasium} during test-time adaptation. 
Note that DT does not leverage cost information in its offline pretraining dataset, 
while CDT incorporates cost signals as in the original pretraining. 
Surprisingly, in most environments, both backbones become safer once our density-based 
Lyapunov guidance is applied. This observation is analogous to the results of LDM, 
yet our method attains safety improvements without requiring a new 
from-scratch pretraining, offering a simple and effective enhancement.

Finally, \cref{tab:cdt} and \cref{tab:cdt_Appendix} report aggregated benchmark results in safe RL. 
\cref{tab:cdt_Appendix} extends \cref{tab:cdt} by including 8 additional tasks from \texttt{bullet-safety-gym} on top of the 16 tasks in \texttt{Safety Gymnasium}. 
Across heterogeneous agents with varying observations and dynamics (\texttt{point, car, ant, ball, drone}), SAS consistently outperforms, including strong safe-control methods such as BC-Safe, and yields superior results over its own backbones DT and CDT. 
Among the methods that satisfy the safety criterion (average normalized cost $\leq 1$), namely CDT+SAS, BC-Safe, and DCRL, CDT+SAS achieves the highest average reward.
Per-task standard deviations across seeds and cost thresholds are reported in \cref{tab:sas_modified}; SAS consistently reduces variance in cost relative to its backbone.

This effectiveness without explicit cost signals can be attributed to the structure of the offline safe RL datasets (e.g., DSRL), where expert demonstrations naturally concentrate in safe regions.
Consequently, high-density states under the occupancy measure $\hat{\rho}^\pi$ largely coincide with low-cost states, allowing the density-based Lyapunov model to serve as an implicit safety proxy.

\paragraph{Q3: Is SAS better than random or max prompt?}\mbox{}\\
In \cref{tab:sas}, DT+SAS exhibits consistently lower costs than the random prompting baseline (DT+rand), whose costs often exceed those of DT in half of \texttt{Safety Gymnasium}. To validate the role of Lyapunov-guided selection, we also consider an alternative ablation (DT+maxmax) that chooses the trajectory with the largest maximum step-wise energy. This variant yields higher costs and failure rates, and generally lower rewards, compared to DT+SAS. These results confirm that SAS provides a valid and effective self-generated prompt: by selecting trajectories that minimize the worst-case energy, the transformer makes safer decisions throughout the episode, analogous to choosing an \emph{appropriate high-level skill in hierarchical RL}.


\begin{figure*}[t]
    \centering
    \includegraphics[width=0.95\linewidth]{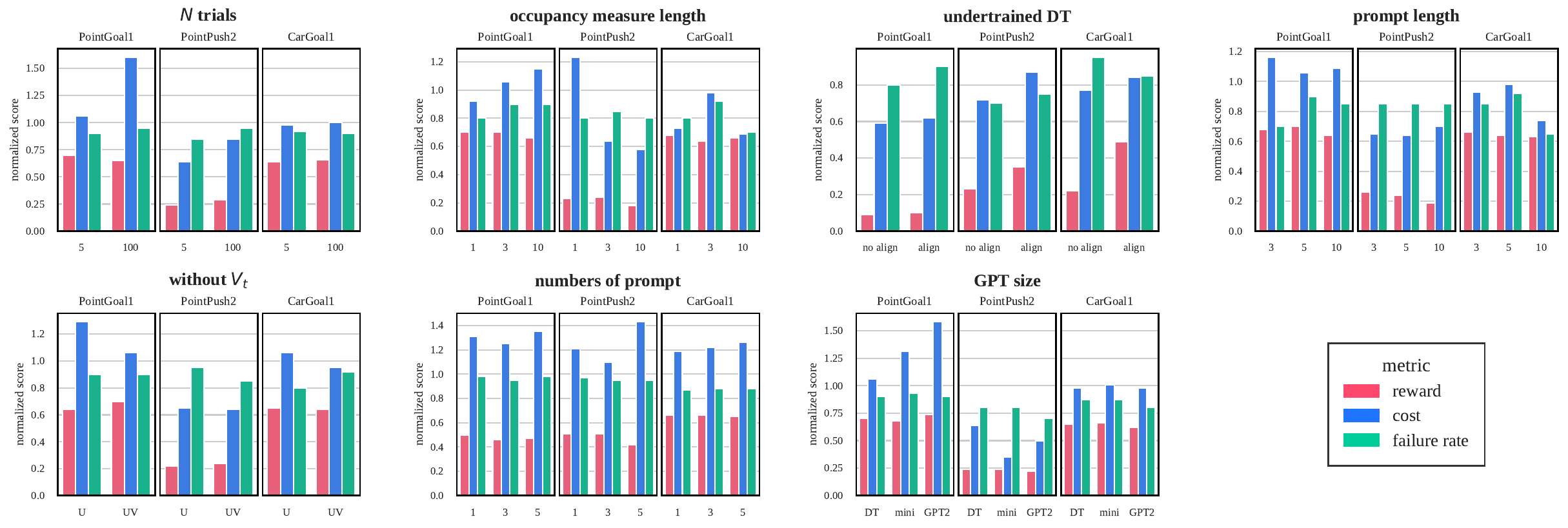}
\caption{\textbf{Comprehensive ablation of SAS.} 
Normalized \tr{reward}, \tb{cost}, and \tg{failure rate} on 
\texttt{PointGoal1}/\texttt{PointPush2}/\texttt{CarGoal1}. 
\textbf{Top:} number of imagined trajectories per loop ($N\!\in\!\{5,100\}$); 
horizon used to aggregate energy $E\!=\!-\log\hat\rho$ ($\{1,3,10\}$); 
self-alignment on an under-pretrained DT (\textit{no align} vs.\ \textit{align}); 
prompt length $K\!\in\!\{3,5,10\}$. 
\textbf{Bottom:} ablation of the Lyapunov-descent condition 
($\mathcal{U}_t$-only [U] vs.\ $\mathcal{U}_t\!\wedge\!\mathcal{V}_t$ [UV]); 
number of in-context prompt fragments concatenated ($\{1,3,5\}$); 
backbone scale (DT, GPT-mini, GPT-2).}
    \label{fig:hist_full}
\end{figure*}

\paragraph{Q4: Does SAS benefit undertrained models?}\mbox{}\\
The \emph{undertrained DT} panel of \cref{fig:hist_full} compares an under-pretrained 
DT backbone with (\textit{align}) and without (\textit{no align}) self-alignment. 
SAS improves the reward across all three environments, even though cost and 
failure rate also rise relative to the well-trained baseline. 
It implies that SAS can extract useful behavior from partially trained models at test 
time -- the Lyapunov-guided prompt steers the agent toward in-distribution.

\paragraph{Q5: How essential is $\mathcal{V}_t$?}\mbox{}\\
The \emph{without $\mathcal{V}_t$} panel of \cref{fig:hist_full} ablates the 
Lyapunov-descent stage by comparing $\mathcal{U}_t$ alone (U) with the full 
$\mathcal{U}_t \wedge \mathcal{V}_t$ (UV). 
Adding $\mathcal{V}_t$ consistently lowers cost across all three environments; 
in terms of failure rate, the improvement carries over to \texttt{PointGoal1} 
and \texttt{PointPush2}, with only a slight uptick on \texttt{CarGoal1}. 
This confirms that $\mathcal{V}_t$ is the component stabilizing long-horizon 
rollouts -- structurally, it realizes the second term of 
\cref{eq:control_invariant} governed by $M$ and $\kappa$, so removing it 
($\kappa{=}0$) tolerates a violation at every step and inflates cost.

\paragraph{Q6: Is SAS robust across design choices?}\mbox{}\\
The remaining five panels of \cref{fig:hist_full} sweep the main hyperparameters 
of SAS: number of imagined trajectories $N\!\in\!\{5,100\}$, the energy 
aggregation horizon $\{1,3,10\}$ (\emph{occupancy measure length}), prompt length 
$K\!\in\!\{3,5,10\}$, the number of prompt fragments concatenated 
$\{1,3,5\}$ (\emph{numbers of prompt}), and backbone scale (DT $\to$ GPT-mini $\to$ GPT-2). 
Variations within each axis are small and environment-specific, confirming that 
SAS is broadly robust to these choices. 
Two of these axes have a direct interpretation through \cref{thm:control_invariant}: 
$N$ tightens the first ($\mathcal{U}_t$) term of the bound, although excessively 
large $N$ (e.g., $100$) can overfit to extremely high-occupancy regions and erode 
task return; the energy aggregation horizon corresponds to $T$, where 
$G_{\textnormal{SAS}}$ captures the worst-case energy along a trajectory, so a 
moderate horizon already pinpoints the decisive unsafe region. 
The remaining axes (prompt length, number of fragments, backbone size) are not 
directly tied to the bound but show consistent saturation, in line with the 
qualitative robustness predicted by the analysis. 
Per-axis details are deferred to \cref{sec:append_abal}.

\paragraph{Related work.}
SAS sits at the intersection of three lines.
\emph{(i) Transformer-based RL}---Decision Transformer
\citep{chen2021decision,janner2021sequence}, transformer world models
\citep{robine2023transformer,micheli2023transformers}, and
prompt-/cost-conditioning extensions
\citep{pmlr-v202-jiang23b,xu2022prompting,liu2023constrained}---provides
the autoregressive imagination backbone we adapt, but does not address
safety alignment at test time.
\emph{(ii) Safe RL with Lyapunov-style certificates}
\citep{chow2018lyapunov,chang2019neural,kang2022lyapunov,qin2021density,zheng2024safe}
acts at \emph{training time}, learning a constrained or feasibility-aware
policy from scratch.
\emph{(iii) Alignment of pretrained models} is dominated by RLHF
\citep{ouyang2022training} and prompt-based instruction tuning
\citep{askell2021general,sun2023principledriven}, but safety alignment of an
offline RL agent \emph{at test time} remains largely unexplored.
SAS fills this gap by aligning an offline-pretrained transformer at deployment
through Lyapunov-guided self-generated prompts, without parameter updates or
human supervision.
A full discussion is deferred to \cref{sec:related} in the appendix.

\section{CONCLUSION}

We presented SAS, a Lyapunov-guided test-time adaptation for offline RL that improves safety without retraining, consistently lowering cost and failure while maintaining competitive returns.
Current limitations include the additional inference cost from multiple imagined rollouts and the reliance on data density rather than explicit cost constraints, making safety guarantees dependent on offline dataset coverage.
We hope SAS encourages future work on reducing inference overhead, incorporating explicit constraint signals, and broader test-time adaptation for safety.

\subsubsection*{Acknowledgments}
This work was supported in part by the National Research Foundation of Korea (NRF) 
(grant numbers RS-2024-00451435 (20\%) and RS-2024-00413957 (20\%)); 
the Institute of Information \& Communications Technology Planning \& Evaluation (IITP) 
(grant numbers RS-2025-02305453 (15\%), RS-2025-02273157 (15\%), 
RS-2025-25442149 (15\%), and RS-2021-II211343 (15\%)) 
funded by the Ministry of Science and ICT (MSIT); 
the Institute of New Media and Communications (INMAC); 
the BK21 FOUR Program of the Education and Artificial Intelligence Graduate School Program 
at Seoul National University; 
and the Research Program for Future ICT Pioneers at Seoul National University (2026).

\bibliography{src/ref}

\section*{Checklist}

\begin{enumerate}

  \item For all models and algorithms presented, check if you include:
  \begin{enumerate}
    \item A clear description of the mathematical setting, assumptions, algorithm, and/or model. [Yes]  
    Justification: Section 2 (Background) and Section 3 (Control Invariant Set, Lyapunov Model) clearly define the MDP setting, assumptions, and the proposed SAS algorithm.
    
    \item An analysis of the properties and complexity (time, space, sample size) of any algorithm. [No]  
    Justification: The paper does not provide explicit computational complexity or sample complexity analysis, although iterative algorithm properties are discussed in Section 4 and Appendix F.
    
    \item (Optional) Anonymized source code, with specification of all dependencies, including external libraries. [Yes]  
    Justification: We provide anonymized source code with dependency specifications in the supplementary material.

  \end{enumerate}

  \item For any theoretical claim, check if you include:
  \begin{enumerate}
    \item Statements of the full set of assumptions of all theoretical results. [Yes]  
    Justification: Assumptions are explicitly stated in the main text (Assumption~1 in Section 4.2) and in Appendix E.
    
    \item Complete proofs of all theoretical results. [Yes]  
    Justification: Full proofs are provided in Appendix F (e.g., Proof of Theorem 2, Theorem 3, and Eq. (8)).
    
    \item Clear explanations of any assumptions. [Yes]  
    Justification: Explanations of assumptions (e.g., bounded Lyapunov difference) are given inline in Section 4.2 and detailed in Appendix E.
  \end{enumerate}

  \item For all figures and tables that present empirical results, check if you include:
  \begin{enumerate}
    \item The code, data, and instructions needed to reproduce the main experimental results (either in the supplemental material or as a URL). [Yes]  
    Justification: We provide anonymized source code and reproduction instructions in the supplementary material; datasets (D4RL, DSRL, Safety Gymnasium) are publicly available and cited.

    \item All the training details (e.g., data splits, hyperparameters, how they were chosen). [Yes]  
    Justification: Section C (Appendix) provides experiment setting, dataset details, normalization, and hyperparameters (Table 5).
    
    \item A clear definition of the specific measure or statistics and error bars (e.g., with respect to the random seed after running experiments multiple times). [Yes]  
    Justification: Tables 1–3 report averages over 20 evaluation episodes and 3 random seeds; normalized metrics are defined in Appendix C.2.
    
    \item A description of the computing infrastructure used. (e.g., type of GPUs, internal cluster, or cloud provider). [No]  
    Justification: The paper does not specify the hardware or compute infrastructure.
  \end{enumerate}

  \item If you are using existing assets (e.g., code, data, models) or curating/releasing new assets, check if you include:
  \begin{enumerate}
    \item Citations of the creator If your work uses existing assets. [Yes]  
    Justification: The paper cites D4RL, RL-Unplugged, DSRL, and Safety Gymnasium.
    
    \item The license information of the assets, if applicable. [No]  
    Justification: Licenses for datasets are not specified.
    
    \item New assets either in the supplemental material or as a URL, if applicable. [Not Applicable]  
    Justification: No new dataset or model asset is released.
    
    \item Information about consent from data providers/curators. [Not Applicable]  
    Justification: Only publicly available benchmark datasets are used.
    
    \item Discussion of sensible content if applicable, e.g., personally identifiable information or offensive content. [Not Applicable]  
    Justification: The datasets contain only robotics and control trajectories, no sensitive content.
  \end{enumerate}

  \item If you used crowdsourcing or conducted research with human subjects, check if you include:
  \begin{enumerate}
    \item The full text of instructions given to participants and screenshots. [Not Applicable]  
    \item Descriptions of potential participant risks, with links to Institutional Review Board (IRB) approvals if applicable. [Not Applicable]  
    \item The estimated hourly wage paid to participants and the total amount spent on participant compensation. [Not Applicable]  
    Justification: No human subjects or crowdsourcing are involved in this work.
  \end{enumerate}

\end{enumerate}

\clearpage
\appendix
\thispagestyle{empty}
\onecolumn
\aistatstitle{Supplementary Materials: \\
Self-alignment for Offline Safe Reinforcement Learning}


\section{RELATED WORK}\label{sec:related}

\paragraph{Transformer-based RL.}
Transformer-based RL \citep{janner2021sequence,chen2021decision} emerged by bridging the autoregressive pretraining paradigm of GPT \citep{radford2018improving} with offline RL on prior data.
More recently, transformer-based world models such as TWM \citep{robine2023transformer} and IRIS \citep{micheli2023transformers} have achieved sample-efficient online RL by leveraging autoregressive imagination.
Prompting in transformer-based RL has also been explored for task specification with multi-modal prompts \citep{pmlr-v202-jiang23b} and test-time adaptation via learned prompts \citep{xu2022prompting}.
CDT~\citep{liu2023constrained} is closely related to our work, as it extends the Decision Transformer architecture to offline safe RL by conditioning on cost returns.
Unlike these prompting and cost-conditioning approaches, SAS aligns a transformer-based world model via in-context learning by providing self-generated, Lyapunov-guided instructions, without any fine-tuning or cost annotations.

\paragraph{Safe RL and Lyapunov condition.}
Lyapunov conditions have been widely applied to safe control \citep{chang2019neural} and safe RL \citep{chow2018lyapunov}.
LDM \citep{kang2022lyapunov} integrates the Lyapunov condition with offline RL to mitigate distribution shift for safety.
While safe RL is often formulated as a constrained MDP using control-theoretic surrogates---reachability \citep{bansal2021deepreach}, iterative constraint enforcement \citep{ganai2023iterative}, or trust-region constraints \citep{kim2023trust}---to prevent entering unsafe regions, we instead leverage the Lyapunov condition to avoid unsafe regions caused by distribution shift \citep{tedrake2009underactuated,bharadhwaj2020conservative}.
DCRL \citep{qin2021density} is related to our approach in that it constrains state density levels to keep the agent in high-probability states.
FISOR \citep{zheng2024safe} also shares a high-level motivation with our work: both leverage a scalar stability surrogate---Hamilton--Jacobi-based feasible value in FISOR, Lyapunov-based density in SAS---to mitigate safety failures from distribution shift.
However, FISOR is a \emph{training-time} offline safe RL algorithm that learns a new diffusion-based policy from scratch, whereas SAS is a \emph{test-time} alignment method that adapts an already-trained agent via in-context self-generated instructions without retraining.
The two approaches are thus complementary: FISOR produces a safe policy during training, while SAS aligns a pretrained model at test time.

\paragraph{Large model alignment.}
Alignment techniques for LLMs aim to make pretrained models safer and more helpful, e.g., steering a general language assistant toward \textit{helpful, honest, harmless} (HHH) behavior \citep{askell2021general}.
These methods broadly fall into RLHF \citep{ouyang2022training} and instruction-based in-context learning \citep{sun2023principledriven,DBLP:conf/acl/WangKMLSKH23}, the latter engineering prompts \citep{brown2020language} or leveraging zero-shot reasoning \citep{kojima2022large} to elicit desired outputs.
In RL, aligning pretrained models for human preferences is relatively well-studied, but adapting them to unseen task specifications via demonstration \citep{pmlr-v202-jiang23b} or augmented prompts \citep{xu2022prompting} remains limited---even though alignment for safety is essential in real-world deployment.
SAS fills this gap by adapting an offline-pretrained agent at test time through Lyapunov-guided self-generated prompts, requiring no human supervision or parameter updates.

\section{MODEL ARCHITECTURE}
\begin{figure}[hbt!]
    \centering
    \includegraphics[width=0.3\linewidth]{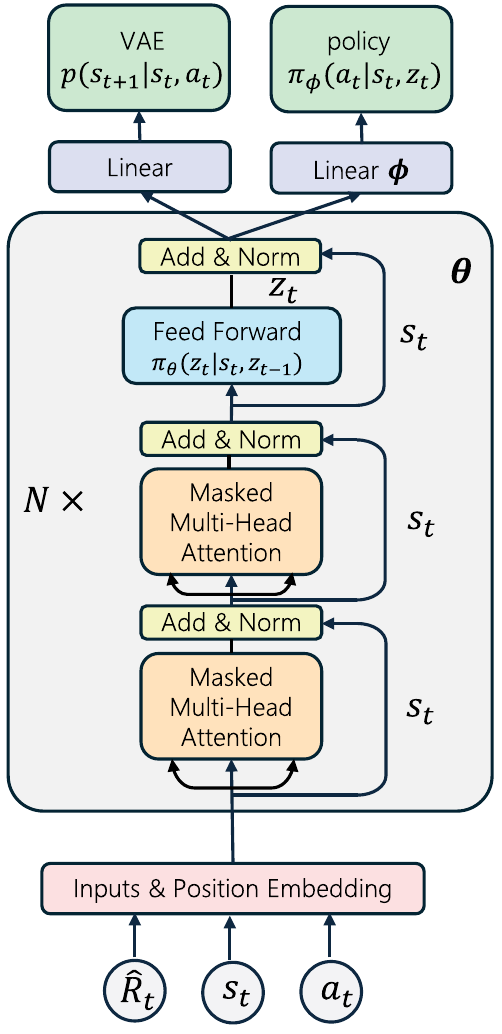}
    \vskip -5pt
    \caption{The architecture of decision transformer with VAE for model-based RL. The only difference with decision transformer is the additional linear layer and VAE decoder to predict the next state. We consider the output of feed-forward layer as the predictor of $z_t$ with the parameter $\theta$ which corresponds to the high-level policy, and the combined values of $s_t, z_t$ by the attention and residual connection are fed into the low-level policy with the linear layer $\phi$. }
    \label{fig:dt}
\end{figure}

\section{EXPERIMENT SETTING AND HYPERPARAMETERS}\label{appendix:intro}

\subsection{Experiment setting}\label{sec:appendix_exp_setting}
We conduct Hopper, Walker2d, and Humanoid in OpenAI Gym, where the agent fails and terminates when the sum of unhealthy rewards get larger.
For Safety Gymnasium, we use two different robots (\texttt{Point, Car}) in 3 tasks (\texttt{Goal, Push, Button}) with two difficulties (\texttt{1,2}) respectively.
In \texttt{Goal} and \texttt{Button} tasks, an agent navigate to the goal while avoiding touching hazards, and an agent push a box to the goal in \texttt{Push} task.
We denote normalized reward and cost returns as reward and cost for simplicity, and use failure in Tables for failure rate.
If an agent experiences any cost due to encountering a hazard within an episode or exceeding unhealthy cost for mujoco (terminated), we considered that episode as a failure episode.
The baselines we used are CDT \citep{liu2023constrained}, Imitation Learning (BC-Safe, BC-All\citep{liu2023constrained, xu2022constraints}), Distribution Correction Estimation (COptiDICE\citep{lee2022coptidice}), and Q-learning (CPQ, BCQ-Lag, BEAR-Lag\citep{xu2022constraints}).

\subsubsection{Baseline Details}
\label{app:baselines}

For completeness, we provide the full set of baseline methods used in our experiments. 
These methods span standard offline RL, imitation learning, distributional correction, 
and Q-learning approaches:

\begin{itemize}
    \item \textbf{Constrained Decision Transformer (CDT)} \citep{liu2023constrained}: 
    A transformer-based offline RL algorithm that incorporates safety constraints 
    into trajectory modeling.

    \item \textbf{Imitation Learning baselines}:  
    \begin{itemize}
        \item \textbf{BC-Safe} \citep{liu2023constrained}: Behavioral cloning restricted to safe demonstrations.  
        \item \textbf{BC-All} \citep{liu2023constrained, xu2022constraints}: Behavioral cloning trained on the entire dataset, 
        including unsafe trajectories.  
    \end{itemize}

    \item \textbf{Distribution Correction Estimation}:  
    \begin{itemize}
        \item \textbf{COptiDICE} \citep{lee2022coptidice}: 
        A distribution correction approach that optimizes cost-sensitive objectives 
        for safe offline RL.  
    \end{itemize}

    \item \textbf{Q-learning baselines}:  
    \begin{itemize}
        \item \textbf{CPQ} \citep{xu2022constraints}: Constrained Policy Q-learning for safe offline RL.  
        \item \textbf{BCQ-Lag} \citep{xu2022constraints}: Offline BCQ augmented with a Lagrangian penalty for safety.  
        \item \textbf{BEAR-Lag} \citep{xu2022constraints}: Offline BEAR similarly extended with Lagrangian safety regularization.  
    \end{itemize}
\end{itemize}

We adopt standard hyperparameters for these baselines from their respective papers. 
All implementations are based on publicly available code where possible.

\subsection{Hyperparameters for the Experiments}
During the training of Decision Transformer, we applied warmup for the first 10000 steps, and we used the ReLU activation function. Further details about the hyperparameters can be found in \cref{fig:hyperparameters}.

\begin{table}[hbt!]
\caption{Hyperparameters for the experiments}
\vskip 0.15in
   \resizebox{\columnwidth}{!}{%
   {\renewcommand{\arraystretch}{1.1}
\begin{tabular}{cccccc}
\cline{1-2} \cline{4-6}
Common Parameters    & Safety-Gymnasium                          &  & Parameters                & CDT          & DT           \\ \cline{1-2} \cline{4-6} 
Action hidden size   & [256, 256] for all methods except CDT, DT &  & Number of layers          & 3            & 3            \\
VAE hidden size      & [400, 400] BCQ-Lag, BEAR-Lag, CPQ         &  & Number of attention heads & 8            & 1            \\
Cost thresholds      & [20, 40, 80]                              &  & Embedding dimension       & 128          & 128          \\
Gradient steps       & 100000                                    &  & Batch size                & 2048         & 64           \\
$[K_\mathcal{P}$, $K_\mathcal{I}$, $K_\mathcal{D}]$         & [0.1, 0.003, 0.001] BCQ-Lag, BEAR-Lag     &  & Context length K          & 300          & 20           \\
Batch size           & 512                                       &  & Learning rate             & 0.0001       & 0.0001       \\
Actor learning rate  & 0.0001                                    &  & Dropout                   & 0.1          & 0.1          \\
Critic learning rate & 0.001                                     &  & Adam betas                & (0.9, 0.999) & (0.9, 0.999) \\ \cline{1-2} \cline{4-6} 
\end{tabular}%
}}
\label{fig:hyperparameters}
\end{table}

\subsection{Normalized Score}

 We applied normalization to both reward return and cost return to make it easier to compare for all environments. Let $r_{max}(\mathcal M)$ and $r_{min}(\mathcal{M})$ denote the maximum reward return and minimum reward return in the dataset $\mathcal{T}$, respectively. Then, the normalized reward return is computed as:
\begin{equation*}
    R_{normalized}=\frac{R_\pi - r_{min}(\mathcal{M})}{r_{max}(\mathcal M)-r_{min}(\mathcal M)}
\end{equation*}
where $\mathcal{R_\pi}$ denotes the evaluated reward return obtained by the agent. Normalized cost return is defined as the ratio between the cost return obtained by the agent and the target cost $\kappa$:
\begin{equation*}
    C_{normalized} = \frac{C_{\pi}+\epsilon}{\kappa + \epsilon}
\end{equation*}
where $\epsilon$ is a small positive number for numerical stability. The values are averaged across three different cost thresholds, 20 evaluation episodes, and three random seeds.

\subsection{Dataset Details}
We conducted experiments using the OpenAI Gym's medium and expert datasets from \url{https://github.com/Farama-Foundation/D4RL} and the Safety Gymnasium's expert dataset from \url{https://github.com/liuzuxin/OSRL/tree/main}. Detailed information about the dataset is presented in \cref{tab:my-tab4}. The Max Cost means the maximum cost return in dataset trajectories.

\begin{table}[hbt!]
\label{tab:my-tab4}
\caption{Dataset details}
\vskip 0.15in
   \resizebox{\columnwidth}{!}{%
   {\renewcommand{\arraystretch}{1.0}
\begin{tabular}{|c|c|c|c|c|c|c|}
\hline
Benchmark                          & Task                  & Max Timestep           & Action Space        & State Space & Max Cost & Trajectories \\ \hline
\multirow{12}{*}{Safety Gymnasium} & SafetyPointGoal1-v0   & \multirow{12}{*}{1000} & \multirow{12}{*}{2} & 60          & 100      & 2022         \\
                                   & SafetyPointGoal2-v0   &                        &                     & 60          & 200      & 3442         \\
                                   & SafetyPointPush1-v0   &                        &                     & 76          & 150      & 2379         \\
                                   & SafetyPointPush2-v0   &                        &                     & 76          & 200      & 3242         \\
                                   & SafetyPointButton1-v0 &                        &                     & 76          & 200      & 2268         \\
                                   & SafetyPointButton2-v0 &                        &                     & 76          & 250      & 3288         \\
                                   & SafetyCarGoal1-v0     &                        &                     & 72          & 200      & 1671         \\
                                   & SafetyCarGoal2-v0     &                        &                     & 72          & 250      & 4105         \\
                                   & SafetyCarPush1-v0     &                        &                     & 88          & 250      & 2871         \\
                                   & SafetyCarPush2-v0     &                        &                     & 88          & 400      & 4407         \\
                                   & SafetyCarButton1-v0   &                        &                     & 88          & 250      & 2656         \\
                                   & SafetyCarButton2-v0   &                        &                     & 88          & 300      & 3755         \\ \hline
\end{tabular}%
}}
\end{table}

\section{COMPARISON WITH FISOR}\label{sec:fisor_comparison}

As discussed in \cref{sec:related}, FISOR \citep{zheng2024safe} is a training-time offline safe RL method, while SAS is a test-time alignment approach.
Although the two methods address different problem settings, we provide a direct empirical comparison for completeness.
\cref{tab:fisor} reports normalized reward and cost on overlapping environments from Safety-Gymnasium and Bullet-Safety-Gym.
CDT+SAS generally achieves higher reward but incurs higher cost, reflecting its performance-oriented design.
FISOR consistently yields lower cost at the expense of some reward, reflecting its safety-oriented training objective.
The two approaches are complementary: FISOR can be used to obtain a safe policy during training, while SAS can further align a pretrained agent at deployment.

\begin{table}[h]
\centering
\caption{Comparison of CDT+SAS and FISOR on overlapping environments. Reward ($\uparrow$) and Cost ($\downarrow$) are normalized. Values below 1.0 for cost indicate safe behavior.}
\label{tab:fisor}
\vspace{0.05in}
\small
\begin{tabular}{lcccc}
\toprule
 & \multicolumn{2}{c}{FISOR} & \multicolumn{2}{c}{CDT+SAS} \\
\cmidrule(lr){2-3}\cmidrule(lr){4-5}
Task & Reward & Cost & Reward & Cost \\
\midrule
\multicolumn{5}{l}{\textit{Safety-Gymnasium}} \\
PointGoal1 & 0.30 & \textbf{0.35} & \textbf{0.65} & 1.27 \\
PointGoal2 & 0.30 & \textbf{0.35} & \textbf{0.52} & 0.94 \\
PointPush1 & \textbf{0.49} & 0.83 & 0.26 & \textbf{0.54} \\
PointPush2 & \textbf{0.44} & \textbf{0.11} & 0.20 & 0.53 \\
PointButton1 & 0.01 & \textbf{0.58} & \textbf{0.51} & 1.27 \\
PointButton2 & 0.01 & \textbf{0.58} & \textbf{0.41} & 0.98 \\
\midrule
\multicolumn{5}{l}{\textit{Bullet-Safety-Gym}} \\
BallRun & \textbf{0.18} & \textbf{0.00} & 0.04 & 0.29 \\
CarRun & \textbf{0.73} & \textbf{0.14} & 0.72 & 0.39 \\
DroneRun & 0.30 & \textbf{0.55} & \textbf{0.38} & 0.78 \\
AntRun & \textbf{0.45} & \textbf{0.03} & 0.29 & 0.45 \\
BallCircle & \textbf{0.34} & \textbf{0.00} & 0.31 & 0.47 \\
CarCircle & \textbf{0.40} & \textbf{0.11} & 0.22 & 0.75 \\
DroneCircle & 0.48 & \textbf{0.00} & \textbf{0.54} & 0.63 \\
AntCircle & 0.20 & \textbf{0.00} & \textbf{0.26} & 0.34 \\
\bottomrule
\end{tabular}
\end{table}

\clearpage
\section{ABLATION STUDIES} \label{sec:append_abal}


\subsection{Number of trajectories sampled for imagination}
We employ Decision Transformer to imagine multiple trajectories under both condition $\mathcal{U}_t$ and condition $\mathcal{V}_t$. In our case, we sampled 5 trajectories for each condition $\mathcal{U}_t$ and $\mathcal{V}_t$. As part of an ablation study, we compared the results of sampling 100 trajectories in experiment with our experimental results. The experiment revealed that there was not a significant difference in the model's performance due to the difference in the number of sampled trajectories. In PointGoal1 environment, an increase in cost was observed when the number of sampled trajectories was 100.

\subsection{Time step length to calculate $E$}
We calculated and approximated $E$ from trajectories imagined by Decision Transformer under both conditions U and V. We conducted experiments with a default time step length of 3 for computing $E$. As part of an ablation study, we also experimented with time step lengths of 1 and 10, comparing the results with our findings. In the results for the CarGoal1 environment, the cost is lowest when the time step length is 10, while in the PointGoal1 environment, it is actually highest. This indicates that increasing the time step length for calculating $E$ does not noticeably improve the model's performance.
\subsection{Time step length of trajectory in prompt}

We extracted the trajectory from a specific time t to 5 time steps before that from trajectories generated through Condition $\mathcal{U}_t$ and $\mathcal{V}_t$. We then fed this truncated trajectory into the prompt of Decision Transformer at the test time. As part of an ablation study, we experimented with the time step length of Decision Transformer's prompt, setting it to 3 and 10. For each time step length of the prompt (3, 5, 10), there are instances where the cost in the experimental results is the highest, as well as instances where it is the lowest. Hence, it can be concluded that the time step length of the prompt does not significantly impact the model's performance.

\subsection{Number of prompts}
We proceeded by using a single trajectory fragment generated by our algorithm as the prompt for Decision Transformer. As part of an ablation study, we compared the performance of our approach with the method of concatenating three or five trajectory fragments obtained by running our algorithm three or five times, respectively, and using them as a prompt. The experimental results show that the method of using five trajectory fragments as a prompt resulted in higher costs. While there is some difference in the PointPush2 environment when the number of fragments is 1 or 3, overall, the performance fluctuates without a clear trend.

\subsection{Model size of Decision Transformer}
We conducted experiments to observe how the effectiveness of SAS varies with the model size of the Decision Transformer. Starting from the smallest size, the default Decision Transformer, we experimented with sizes ranging from GPT-mini to larger sizes like GPT2. When examining the PointGoal1 environment, it seems that as the model size increases, the cost also tends to increase. However, looking at the PointPush2 environment, the opposite trend is observed, where the model with the smallest size has the highest cost, suggesting that there may not be a significant correlation. However, concerning failures, except for the gpt-mini size in PointGoal1, it can be observed that as the model size increases, failures generally decrease.


\section{COMPLETE EXPERIMENT RESULTS}\label{sec:complete_exp_results}

\begin{table*}[hbt!]

\caption{Ablation study in the Safety Gymnasium. DT+rand involves inserting a random trajectory into the prompt, and DT+maxmax includes the trajectory with the argmax of the maximum value of $E$ as the prompt. \textbf{Bold}: the smallest cost among the four models. \tb{\textbf{Blue}}: DT+SAS has a lower failure rate than DT. \tr{\textbf{Red}}: DT+SAS has a higher cost than DT but the reward is also higher.}
\vskip 0.15in
\begin{center}
   \resizebox{0.95\textwidth}{!}{%
   {\renewcommand{\arraystretch}{1.2}
   {\small
\begin{tabular}{|cc|cccccccccccc|}
\hline
\multicolumn{2}{|c|}{Environment}                           & PG1                                   & PG2                                   & PP1                          & PP2                                   & PB1                                   & PB2                                   & CG1                                   & CG2                                   & CP1                                   & CP2                                   & CB1                          & CB2                          \\ \hline
\multicolumn{1}{|c|}{}                            & reward  & \tgr{0.660}          & \tgr{0.377}          & \tgr{0.218} & \tgr{0.202}          & \tgr{0.379}          & \tgr{0.495}          & \tgr{0.638}          & \tgr{0.513}          & \tgr{0.35}           & \tgr{0.204}          & \tgr{0.237} & \tgr{0.212} \\
\multicolumn{1}{|c|}{}                            & cost    & \tgr{1.319}          & \tgr{2.625}          & \tgr{0.927} & \tgr{0.782}          & \textbf{1.188}                        & \tgr{1.309}          & \tgr{0.976}          & \tgr{1.466}          & \tgr{0.678}          & \tgr{1.174}          & \tgr{1.419} & \tgr{1.045} \\
\multicolumn{1}{|c|}{\multirow{-3}{*}{DT}}        & failure & \tgr{0.883}          & \tgr{1.000}          & \tgr{0.667} & \tgr{0.875}          & \tgr{0.950}          & \tgr{0.983}          & \tgr{0.917}          & \tgr{0.925}          & \tgr{0.667}          & \tgr{0.950}          & \tgr{0.950} & \tgr{0.950} \\ \hline
\multicolumn{1}{|c|}{}                            & reward  & \tgr{0.655}          & \tgr{0.650}          & \tgr{0.283} & \tgr{0.242}          & \tr{\textbf{0.485}} & \tgr{0.508} & \tgr{0.666}          & \tgr{0.483}          & \tgr{0.307}          & \tgr{0.218}          & \tgr{0.174} & \tgr{0.138} \\
\multicolumn{1}{|c|}{}                            & cost    & \tgr{1.185}          & \textbf{1.783}                        & \textbf{0.622}               & \textbf{0.639}                        & \tgr{1.375}          & \tgr{1.205}          & \textbf{0.846}                        & \textbf{1.148}                        & \textbf{0.513}                        & \textbf{1.158}                        & \textbf{1.083}               & \textbf{0.836}               \\
\multicolumn{1}{|c|}{\multirow{-3}{*}{\textbf{DT+SAS(ours)}}}   & failure & \tb{\textbf{0.867}} & \tb{\textbf{0.983}} & \tgr{0.767} & \tb{\textbf{0.850}} & \tgr{0.950}          & \tb{\textbf{0.967}} & \tb{\textbf{0.867}} & \tb{\textbf{0.850}} & \tb{\textbf{0.483}} & \tb{\textbf{0.900}} & \tgr{0.975} & \tgr{1.000} \\ \hline
\multicolumn{1}{|c|}{}                            & reward  & \tgr{0.665}          & \tgr{0.587}          & \tgr{0.303} & \tgr{0.240}          & \tgr{0.445}          & \tgr{0.462}          & \tgr{0.672}          & \tgr{0.507}          & \tgr{0.311}          & \tgr{0.230}          & \tgr{0.175} & \tgr{0.111} \\
\multicolumn{1}{|c|}{}                            & cost    & \tgr{1.258}          & \tgr{1.811}          & \tgr{0.678} & \tgr{0.758}          & \tgr{1.485}          & \textbf{0.960}                        & \tgr{1.002}          & \tgr{1.438}          & \tgr{0.549}          & \tgr{1.341}          & \tgr{1.259} & \tgr{0.963} \\
\multicolumn{1}{|c|}{\multirow{-3}{*}{DT+rand}}   & failure & \tgr{0.900}          & \tgr{1.000}          & \tgr{0.767} & \tgr{0.875}          & \tgr{1.000}          & \tgr{0.950}          & \tgr{0.867}          & \tgr{0.975}          & \tgr{0.617}          & \tgr{0.950}          & \tgr{0.975} & \tgr{0.925} \\ \hline
\multicolumn{1}{|c|}{}                            & reward  & \tgr{0.644}          & \tgr{0.521}          & \tgr{0.271} & \tgr{0.200}          & \tgr{0.486}          & \tgr{0.441}          & \tgr{0.636}          & \tgr{0.512}          & \tgr{0.321}          & \tgr{0.206}          & \tgr{0.131} & \tgr{0.126} \\
\multicolumn{1}{|c|}{}                            & cost    & \textbf{0.990}                        & \tgr{2.152}          & \tgr{0.640} & \tgr{0.730}          & \tgr{1.808}          & \tgr{1.273}          & \tgr{1.034}          & \tgr{1.497}          & \tgr{0.574}          & \tgr{1.271}          & \tgr{1.103} & \tgr{0.911} \\
\multicolumn{1}{|c|}{\multirow{-3}{*}{DT+maxmax}} & failure & \tgr{0.775}          & \tgr{1.000}          & \tgr{0.767} & \tgr{0.783}          & \tgr{1.000}          & \tgr{0.950}          & \tgr{0.817}          & \tgr{0.933}          & \tgr{0.625}          & \tgr{0.975}          & \tgr{0.967} & \tgr{0.975} \\ \hline
\end{tabular}%
}}}
\label{tab:sas}
\end{center}
\vskip -20pt
\end{table*}

\subsection{Results for all the datasets} 
We present the results for a total of 16 datasets in \cref{tab:cdt_Appendix}. 
These results include an additional experiment on four Circle tasks (PointCircle1, PointCircle2, CarCircle1, CarCircle2) and eight tasks in bullet-safety-gym environment(BallRun, CarRun, DroneRun, AntRun, BallCircle, CarCircle, DroneCircle, AntCircle). 
In PC2, CC1, and CC2 environments, CDT+SAS exhibited the highest reward among safe agents. 
CDT+SAS demonstrates lower costs than CDT in all four environments.

\begin{table*}[hbt!]
\caption{Complete evaluation results of the baselines and the Decision Transformer with our method (DT+SAS) and Constrained Decision Transformer with our method (CDT+SAS) in the Safety Gymnasium environment. The values are averaged across three different cost thresholds, 20 evaluation episodes, and three random seeds. \tgr{Gray}: Unsafe agents. \textbf{Bold}: Safe agents whose normalized cost is less than 1. \tb{\textbf{Blue}}: Agents which has highest reward among safe agents}
\label{tab:cdt_Appendix}
\vskip 0.15in

\begin{center}

   \resizebox{\textwidth}{!}{%
{\renewcommand{\arraystretch}{1.2}

\begin{tabular}{|cl|cc|cc|cc|cc|cc|cc|cc|cc|cc|}
\hline
\multicolumn{2}{|c|}{}                       & \multicolumn{2}{c|}{DT + SAS}                                              & \multicolumn{2}{c|}{CDT + SAS}                                             & \multicolumn{2}{c|}{CDT}                                                    & \multicolumn{2}{c|}{BC-All}                                & \multicolumn{2}{c|}{BC-Safe}                                                & \multicolumn{2}{c|}{BCQ-Lag}                                                & \multicolumn{2}{c|}{BEAR-Lag}                              & \multicolumn{2}{c|}{CPQ}                                   & \multicolumn{2}{c|}{COptiDICE}                             \\ \cline{3-20} 
\multicolumn{2}{|c|}{\multirow{-2}{*}{Task}} & reward                               & cost                                 & reward                               & cost                                 & reward                               & cost                                 & reward                       & cost                        & reward                               & cost                                 & reward                               & cost                                 & reward                       & cost                        & reward                      & cost                         & reward                       & cost                        \\ \hline
\multicolumn{2}{|c|}{PointGoal1}             & \tgr{0.66}          & \tgr{1.19}          & \tgr{0.65}          & \tgr{1.27}          & \tgr{0.69}          & \tgr{1.12}          & \textbf{0.65}                & \textbf{0.95}               & \textbf{0.43}                        & \textbf{0.54}                        & \tb{\textbf{0.71}} & \tb{\textbf{0.98}} & \tgr{0.74}  & \tgr{1.18} & \textbf{0.57}               & \textbf{0.35}                & \tgr{0.49}  & \tgr{1.66} \\
\multicolumn{2}{|c|}{PointGoal2}             & \tgr{0.65}          & \tgr{1.78}          & \tb{\textbf{0.52}} & \tb{\textbf{0.94}} & \tgr{0.59}          & \tgr{1.34}          & \tgr{0.54}  & \tgr{1.97} & \textbf{0.29}                        & \textbf{0.78}                        & \tgr{0.67}          & \tgr{3.18}          & \tgr{0.67}  & \tgr{3.11} & \tgr{0.4}  & \tgr{1.31}  & \tgr{0.38}  & \tgr{1.92} \\
\multicolumn{2}{|c|}{PointPush1}             & \textbf{0.28}                        & \textbf{0.62}                        & \textbf{0.26}                        & \textbf{0.54}                        & \textbf{0.24}                        & \textbf{0.48}                        & \textbf{0.19}                & \textbf{0.61}               & \textbf{0.13}                        & \textbf{0.43}                        & \tb{\textbf{0.33}} & \tb{\textbf{0.86}} & \textbf{0.22}                & \textbf{0.79}               & \textbf{0.2}                & \textbf{0.83}                & \textbf{0.13}                & \textbf{0.83}               \\
\multicolumn{2}{|c|}{PointPush2}             & \tb{\textbf{0.24}} & \tb{\textbf{0.64}} & \textbf{0.20}                        & \textbf{0.53}                        & \textbf{0.21}                        & \textbf{0.65}                        & \textbf{0.18}                & \textbf{0.91}               & \textbf{0.11}                        & \textbf{0.8}                         & \textbf{0.23}                        & \textbf{0.99}                        & \tgr{0.16}  & \tgr{0.89} & \tgr{0.11} & \tgr{1.04}  & \tgr{0.02}  & \tgr{1.18} \\
\multicolumn{2}{|c|}{PointButton1}           & \tgr{0.49}          & \tgr{1.38}          & \tgr{0.51}          & \tgr{1.27}          & \tgr{0.5}           & \tgr{1.68}          & \tgr{0.1}   & \tgr{10.5} & \tb{\textbf{0.06}} & \tb{\textbf{0.52}} & \tgr{0.24}          & \tgr{1.73}          & \tgr{0.2}   & \tgr{1.6}  & \tgr{0.69} & \tgr{3.2}   & \tgr{0.13}  & \tgr{1.4}  \\
\multicolumn{2}{|c|}{PointButton2}           & \tgr{0.51}          & \tgr{1.14}          & \tb{\textbf{0.41}} & \tb{\textbf{0.98}} & \tgr{0.46}          & \tgr{1.57}          & \tgr{0.27}  & \tgr{2.02} & \tgr{0.16}          & \tgr{1.1}           & \tgr{0.4}           & \tgr{2.66}          & \tgr{0.43}  & \tgr{2.47} & \tgr{0.58} & \tgr{4.3}   & \tgr{0.15}  & \tgr{1.51} \\
\multicolumn{2}{|c|}{PointCircle1}           & \tgr{0.69}          & \tgr{1.81}          & \textbf{0.54}                        & \textbf{0.21}                        & \tb{\textbf{0.59}} & \tb{\textbf{0.69}} & \tgr{0.79}  & \tgr{3.98} & \textbf{0.41}                        & \textbf{0.16}                        & \tgr{0.54}          & \tgr{2.38}          & \tgr{0.73}  & \tgr{3.28} & \textbf{0.43}               & \textbf{0.75}                & \tgr{0.86}  & \tgr{5.51} \\
\multicolumn{2}{|c|}{PointCircle2}           & \tgr{0.42}          & \tgr{1.69}          & \tb{\textbf{0.63}} & \tb{\textbf{0.47}} & \tgr{0.64}          & \tgr{1.05}          & \tgr{0.66}  & \tgr{4.17} & \textbf{0.48}                        & \textbf{0.99}                        & \tgr{0.66}          & \tgr{2.6}           & \tgr{0.63}  & \tgr{4.27} & \tgr{0.24} & \tgr{3.58}  & \tgr{0.85}  & \tgr{8.61} \\
\multicolumn{2}{|c|}{CarGoal1}               & \tb{\textbf{0.67}} & \tb{\textbf{0.85}} & \textbf{0.65}                        & \textbf{0.90}                        & \tgr{0.66}          & \tgr{1.21}          & \textbf{0.39}                & \textbf{0.33}               & \textbf{0.24}                        & \textbf{0.28}                        & \textbf{0.47}                        & \textbf{0.78}                        & \tgr{0.61}  & \tgr{1.13} & \tgr{0.79} & \tgr{1.42}  & \textbf{0.35}                & \textbf{0.54}               \\
\multicolumn{2}{|c|}{CarGoal2}               & \tgr{0.48}          & \tgr{1.15}          & \tb{\textbf{0.42}} & \tb{\textbf{0.98}} & \tgr{0.48}          & \tgr{1.25}          & \tgr{0.23}  & \tgr{1.05} & \textbf{0.14}                        & \textbf{0.51}                        & \tgr{0.3}           & \tgr{1.44}          & \tgr{0.28}  & \tgr{1.01} & \tgr{0.65} & \tgr{3.75}  & \textbf{0.25}                & \textbf{0.91}               \\
\multicolumn{2}{|c|}{CarPush1}               & \tb{\textbf{0.31}} & \tb{\textbf{0.51}} & \tb{\textbf{0.31}} & \tb{\textbf{0.49}} & \tb{\textbf{0.31}} & \tb{\textbf{0.4}}  & \textbf{0.22}                & \textbf{0.36}               & \textbf{0.14}                        & \textbf{0.33}                        & \textbf{0.23}                        & \textbf{0.43}                        & \textbf{0.21}                & \textbf{0.54}               & \textbf{-0.03}              & \textbf{0.95}                & \textbf{0.23}                & \textbf{0.5}                \\
\multicolumn{2}{|c|}{CarPush2}               & \tgr{0.22}          & \tgr{1.16}          & \tb{\textbf{0.21}} & \tb{\textbf{0.75}} & \tgr{0.19}          & \tgr{1.3}           & \textbf{0.14}                & \textbf{0.9}                & \textbf{0.05}                        & \textbf{0.45}                        & \tgr{0.15}          & \tgr{1.38}          & \tgr{0.1}   & \tgr{1.2}  & \tgr{0.24} & \tgr{4.25}  & \tgr{0.09}  & \tgr{1.07} \\
\multicolumn{2}{|c|}{CarButton1}             & \tgr{0.17}          & \tgr{1.08}          & \tb{\textbf{0.27}} & \tb{\textbf{0.98}} & \tgr{0.21}          & \tgr{1.6}           & \tgr{0.03}  & \tgr{1.38} & \textbf{0.07}                        & \textbf{0.85}                        & \tgr{0.04}          & \tgr{1.63}          & \tgr{0.18}  & \tgr{2.72} & \tgr{0.42} & \tgr{9.66}  & \tgr{-0.08} & \tgr{1.68} \\
\multicolumn{2}{|c|}{CarButton2}             & \tb{\textbf{0.14}} & \tb{\textbf{0.84}} & \tgr{0.30}          & \tgr{1.11}          & \tgr{0.13}          & \tgr{1.58}          & \tgr{-0.13} & \tgr{1.24} & \textbf{-0.01}                       & \textbf{0.63}                        & \tgr{0.06}          & \tgr{2.13}          & \tgr{-0.01} & \tgr{2.29} & \tgr{0.37} & \tgr{12.51} & \tgr{-0.07} & \tgr{1.59} \\
\multicolumn{2}{|c|}{CarCircle1}             & \tgr{ 0.41}          & \tgr{ 1.84}          & \tb{\textbf{0.47}} & \tb{\textbf{0.52}} & \tgr{ 0.6}           & \tgr{ 1.73}          & \tgr{ 0.72}  & \tgr{ 4.39} & \tgr{ 0.37}          & \tgr{ 1.38}          & \tgr{ 0.73}          & \tgr{ 5.25}          & \tgr{ 0.76}  & \tgr{ 5.46} & \tgr{ 0.02} & \tgr{ 2.29}  & \tgr{ 0.7}   & \tgr{ 5.72} \\
\multicolumn{2}{|c|}{CarCircle2}             & \tgr{ 0.63}          & \tgr{ 1.69}          & \tb{\textbf{0.56}} & \tb{\textbf{0.62}} & \tgr{ 0.66}          & \tgr{ 2.53}          & \tgr{ 0.76}  & \tgr{ 6.44} & \tgr{ 0.54}          & \tgr{ 3.38}          & \tgr{ 0.72}          & \tgr{ 6.58}          & \tgr{ 0.74}  & \tgr{ 6.82} & \tgr{ 0.44} & \tgr{ 2.69}  & \tgr{ 0.77}  & \tgr{ 7.99} \\ \hline
\multicolumn{2}{|c|}{BallRun}                & \tgr{0.99} & \tgr{1.6}  & \tb{\textbf{0.04}} & \tb{\textbf{0.29}} & \tgr{0.39}          & \tgr{1.16}          & \tgr{0.6}  & \tgr{5.08} & \tgr{0.27}         & \tgr{1.46}          & \tgr{0.76} & \tgr{3.91} & \tgr{-0.47} & \tgr{5.03} & \tgr{0.22}          & \tgr{1.27}          & \tgr{0.59} & \tgr{3.52} \\
\multicolumn{2}{|c|}{CarRun}                 & \tgr{8.12} & \tgr{1.06} & \textbf{0.72}                        & \textbf{0.39}                        & \tb{\textbf{0.99}} & \tb{\textbf{0.65}} & \textbf{0.97}               & \textbf{0.33}               & \textbf{0.94}                       & \textbf{0.22}                        & \textbf{0.94}               & \textbf{0.15}               & \tgr{0.68}  & \tgr{7.78} & \tgr{0.95}          & \tgr{1.79}          & \textbf{0.87}               & \textbf{0}                  \\
\multicolumn{2}{|c|}{DroneRun}               & \tgr{0.76} & \tgr{1.58} & \textbf{0.33}                        & \textbf{0.78}                        & \tb{\textbf{0.63}} & \tb{\textbf{0.79}} & \tgr{0.24} & \tgr{2.13} & \textbf{0.28}                       & \textbf{0.74}                        & \tgr{0.72} & \tgr{5.54} & \tgr{0.42}  & \tgr{2.47} & \tgr{0.33}          & \tgr{3.52}          & \tgr{0.67} & \tgr{4.15} \\
\multicolumn{2}{|c|}{AntRun}                 & \tgr{1.08} & \tgr{2.43} & \textbf{0.32}                        & \textbf{0.14}                        & \tb{\textbf{0.72}} & \tb{\textbf{0.91}} & \tgr{0.72} & \tgr{2.93} & \tgr{0.65}         & \tgr{1.09}          & \tgr{0.76} & \tgr{5.11} & \textbf{0.15}                & \textbf{0.73}               & \textbf{0.03}                        & \textbf{0.02}                        & \textbf{0.61}               & \textbf{0.94}               \\
\multicolumn{2}{|c|}{BallCircle}             & \tgr{0.81} & \tgr{1.41} & \textbf{0.32}                        & \textbf{0.38}                        & \tgr{0.77}          & \tgr{1.07}          & \tgr{0.74} & \tgr{4.71} & \textbf{0.52}                       & \textbf{0.65}                        & \tgr{0.69} & \tgr{2.36} & \tgr{0.86}  & \tgr{3.09} & \tb{\textbf{0.64}} & \tb{\textbf{0.76}} & \tgr{0.7}  & \tgr{2.61} \\
\multicolumn{2}{|c|}{CarCircle}              & \tgr{0.85} & \tgr{1.76} & \textbf{0.19}                        & \textbf{0.22}                        & \tb{\textbf{0.75}} & \tb{\textbf{0.95}} & \tgr{0.58} & \tgr{3.74} & \textbf{0.5}                        & \textbf{0.84}                        & \tgr{0.63} & \tgr{1.89} & \tgr{0.74}  & \tgr{2.18} & \textbf{0.71}                        & \textbf{0.33}                        & \tgr{0.49} & \tgr{3.14} \\
\multicolumn{2}{|c|}{DroneCircle}            & \tgr{0.82} & \tgr{1.55} & \textbf{0.51}                        & \textbf{0.42}                        & \tb{\textbf{0.63}} & \tb{\textbf{0.98}} & \tgr{0.72} & \tgr{3.03} & \textbf{0.56}                       & \textbf{0.57}                        & \tgr{0.8}  & \tgr{3.07} & \tgr{0.78}  & \tgr{3.68} & \tgr{-0.22}         & \tgr{1.28}          & \tgr{0.26} & \tgr{1.02} \\
\multicolumn{2}{|c|}{AntCircle}              & \tgr{0.59} & \tgr{1.18} & \textbf{0.26}                        & \textbf{0.34}                        & \tgr{0.54}          & \tgr{1.78}          & \tgr{0.58} & \tgr{4.9}  & \tb{\textbf{0.4}} & \tb{\textbf{0.96}} & \tgr{0.58} & \tgr{2.87} & \tgr{0.65}  & \tgr{5.48} & \textbf{0}                           & \textbf{0}                           & \tgr{0.17} & \tgr{5.04} \\ \hline
\end{tabular}%
}
}

\end{center}
\end{table*}

\begin{table}[hbt!]
\caption{The modified version of \cref{tab:sas} with standard deviation across 3 cost thresholds, 20 evaluation episodes, and 3 random seeds.}
\label{tab:sas_modified}
   \resizebox{\textwidth}{!}{%
   \renewcommand{\arraystretch}{1.2}
\begin{tabular}{|cl|cccc|cccc|cccc|cccc|}
\hline
\multicolumn{2}{|c|}{\multirow{3}{*}{Task}} & \multicolumn{4}{c|}{CDT}                                                                    & \multicolumn{4}{c|}{CDT+SAS}                                                              & \multicolumn{4}{c|}{DT}                                                                  & \multicolumn{4}{c|}{DT+SAS}                                                             \\ \cline{3-18} 
\multicolumn{2}{|c|}{}                      & \multicolumn{2}{c|}{reward}                            & \multicolumn{2}{c|}{cost}          & \multicolumn{2}{c|}{reward}                            & \multicolumn{2}{c|}{cost}         & \multicolumn{2}{c|}{reward}                           & \multicolumn{2}{c|}{cost}        & \multicolumn{2}{c|}{reward}                           & \multicolumn{2}{c|}{cost}        \\ \cline{3-18} 
\multicolumn{2}{|c|}{}                      & \multicolumn{1}{c|}{mean} & \multicolumn{1}{c|}{std}   & \multicolumn{1}{c|}{mean} & std    & \multicolumn{1}{c|}{mean} & \multicolumn{1}{c|}{std}   & \multicolumn{1}{c|}{mean} & std   & \multicolumn{1}{c|}{mean} & \multicolumn{1}{c|}{std}  & \multicolumn{1}{c|}{mean} & std  & \multicolumn{1}{c|}{mean} & \multicolumn{1}{c|}{std}  & \multicolumn{1}{c|}{mean} & std  \\ \hline
\multicolumn{2}{|c|}{PointGoal1}            & 0.69 & 0.007 & 1.12 & 0.037  & 0.65 & 0.007 & 1.27 & 0.062 & 0.66 & 0.02 & 1.32 & 0.31 & 0.66 & 0.03 & \textbf{1.19} & 0.15 \\
\multicolumn{2}{|c|}{PointGoal2}            & 0.59 & 0.017 & 1.34 & 0.054  & 0.52 & 0.036 & \textbf{0.94} & 0.158 & 0.38 & 0.02 & 2.63 & 0.05 & 0.65 & 0.09 & 1.78 & 0.17 \\
\multicolumn{2}{|c|}{PointPush1}            & 0.24 & 0.012 & 0.48 & 0.023  & 0.26 & 0.027 & 0.54 & 0.019 & 0.22 & 0.06 & 0.93 & 0.21 & 0.28 & 0.01 & \textbf{0.62} & 0.10 \\
\multicolumn{2}{|c|}{PointPush2}            & 0.21 & 1.363 & 0.65 & 31.063 & 0.20 & 0.038 & \textbf{0.53} & 0.089 & 0.20 & 0.08 & 0.78 & 0.45 & 0.24 & 0.06 & \textbf{0.64} & 0.09 \\
\multicolumn{2}{|c|}{PointButton1}          & 0.5  & 0.006 & 1.68 & 0.049  & 0.51 & 0.026 & \textbf{1.27} & 0.044 & 0.38 & 0.04 & 1.19 & 0.18 & 0.49 & 0.05 & 1.38 & 0.21 \\
\multicolumn{2}{|c|}{PointButton2}          & 0.46 & 0.019 & 1.57 & 0.047  & 0.41 & 0.019 & \textbf{0.98} & 0.026 & 0.50 & 0.06 & 1.31 & 0.14 & 0.51 & 0.00 & \textbf{1.14} & 0.13 \\
\multicolumn{2}{|c|}{CarGoal1}              & 0.66 & 0.008 & 1.21 & 0.057  & 0.65 & 0.008 & \textbf{0.90} & 0.035 & 0.64 & 0.02 & 0.98 & 0.12 & 0.67 & 0.03 & \textbf{0.85} & 0.16 \\
\multicolumn{2}{|c|}{CarGoal2}              & 0.48 & 0.032 & 1.25 & 0.095  & 0.42 & 0.032 & \textbf{0.98} & 0.047 & 0.51 & 0.04 & 1.47 & 0.32 & 0.48 & 0.03 & \textbf{1.15} & 0.20 \\
\multicolumn{2}{|c|}{CarPush1}              & 0.31 & 0.018 & 0.40 & 0.068  & 0.31 & 0.018 & \textbf{0.49} & 0.097 & 0.35 & 0.07 & 0.68 & 0.22 & 0.31 & 0.01 & \textbf{0.51} & 0.15 \\
\multicolumn{2}{|c|}{CarPush2}              & 0.19 & 0.022 & 1.30 & 0.081  & 0.21 & 0.023 & \textbf{0.75} & 0.120 & 0.20 & 0.03 & 1.17 & 0.26 & 0.22 & 0.01 & \textbf{1.16} & 0.26 \\
\multicolumn{2}{|c|}{CarButton1}            & 0.21 & 0.014 & 1.60 & 0.106  & 0.27 & 0.081 & \textbf{0.98} & 0.006 & 0.24 & 0.04 & 1.42 & 0.04 & 0.17 & 0.03 & \textbf{1.08} & 0.17 \\
\multicolumn{2}{|c|}{CarButton2}            & 0.13 & 0.031 & 1.58 & 0.034  & 0.30 & 0.009 & \textbf{1.11} & 0.025 & 0.21 & 0.04 & 1.05 & 0.21 & 0.14 & 0.03 & \textbf{0.84} & 0.08 \\ \hline
\end{tabular}}
\end{table}

\subsection{Additional comparison with SOTA offline RL methods and offline meta-RL} 
We note that our DT+SAS which uses the pretrained DT without cost training data outperforms the above SOTA offline safe RL methods.
Moreover, we provide the comparison with CQL, SAC-n, and APE-V which is the online (few-shot) adaptation method for the offline RL algorithm in the table below. We note that our method shows the better improvement compared to the reported value of SAC-n → APE-V in APE-V paper.
However, the target task of offline meta-RL focuses on the adaptation performance when the goal of the target task changes significantly, which differs critically from measuring the generalization performance that is the aim of our paper, making it challenging to conduct additional experiments.

\begin{table}[hbt!]
\caption{Experiment results with CQL algorithm \citep{kumar2020conservative} and APE-V algorithm \citep{ghosh2022offline} in D4RL \citep{fu2021drl} datasets.}
\label{tab:cql_ape-v}

\vskip 0.15in
   \resizebox{\columnwidth}{!}{%
   {\renewcommand{\arraystretch}{1.0}
\begin{tabular}{cll|c|ccccc|ccc}
\hline
\multicolumn{3}{c|}{\multirow{2}{*}{Task Name}} & CQL    & \multicolumn{2}{c}{DT} & \multicolumn{2}{c}{DT+SAS} &            & SAC-N  & APE-V  &            \\ \cline{4-12} 
\multicolumn{3}{c|}{}                           & reward & reward    & failure    & reward       & failure      & improve(\%) & reward & reward & improve(\%) \\ \hline
\multicolumn{3}{c|}{hopper-medium-expert}       & 96.9   & 111.8     & 0.1        & 110.4        & 0.05         & -1.25      & 110    & 105.7  & -3.91      \\
\multicolumn{3}{c|}{hopper-medium-replay}       & 86.3   & 94.3      & 0          & 97.3         & 0            & 3.18       & 101.8  & 98.5   & -3.24      \\
\multicolumn{3}{c|}{walker2d-medium-expert}     & 109.1  & 108.3     & 0          & 107.5        & 0            & -0.74      & 116    & 110    & -5.17      \\
\multicolumn{3}{c|}{walker2d-medium-replay}     & 76.8   & 43.9      & 1          & 69.1         & 0.6          & 57.4       & 78.7   & 82.9   & 5.34       \\ \hline
\end{tabular}%
}}
\end{table}



\section{PROOF}\label{sec:proof}

We first provide technical results in the main paper.
By defining the condition probability of prompt $p_\mathbf{1:K}$ given high-level policy $\pi^{\textnormal{high}}_{\theta}$, we leverage $r_K(\theta)$ to make sure that the well-designed prompt is selected when it is from underlying the  safe high-level policy $\pi^{\textnormal{high}}_{\theta^*}$.
In details, the length variable $K$ can be composed of two conditions, the length of prompt and the number of prompt.
We note that we can have high probability of $p(\mathcal{O}_t =1| \mathbf{z}_t)= \exp( r(\pi^{\textnormal{high}}_{\theta}))$ when we provide the most matching prompt $\mathbf{p}^*$ with the underlying $\pi^{\textnormal{high}}_{\theta^*}$.

\subsection{Connection of density and safe RL}
Let $U$ be the universal set of state-action space and $B$ be the set $\{(s,a)|(s,a),  C(s,a)<C_{th} \}$ where $C_{th}$ is the some threshold of cost which satisfies $C_{th} \leq d (1-\gamma)$.
Assume that the cost value is bounded as $0\leq C(s,a) \leq C_{max}$.
We define a volume constant $\alpha=\E_{(s,a)\sim U}[\1((s,a)\in B)]$, which is $0<\alpha<1$.
Then, we can write the above inequality as
\begin{align}
    &J_C(\pi)-d  =\E_{(s,a) \sim B} \left[\hat{\rho}^{\pi}(s,a) C(s,a) \right] -\alpha d \nonumber \\
    & \quad\quad+ \E_{(s,a) \sim B^C} \left[\hat{\rho}^{\pi}(s,a) C(s,a) \right] -(1-\alpha)d \nonumber\\ 
    &\leq\E_{(s,a) \sim B} \left[\hat{\rho}^{\pi}(s,a) C_{th} -d \right] \nonumber \\
    & \quad\quad+ \E_{(s,a) \sim B^C} \left[\hat{\rho}^{\pi}(s,a) C_{max}-d \right]. \label{eq:occ_measure_const}
\end{align}
We can observe that expert policies in the offline RL dataset $\gD$ should have low values in $B^C$ to satisfy the constraint inequality of \cref{eq:occ_measure_const} less than zero.
Now, we know that reducing the marginal value of occupancy measure over $B^C$ leads to the lower bound of the cost function of $\pi$.
Suppose that $\hat{\rho}^\pi(s,a)\leq {d \over C_{max}}$ for $(s,a)\in B^C$.
We consider the definition of occupancy measure and assume that occupancy measure is a continuous function.
We have  ${d\over C_{max} }\leq\hat{\rho}^\pi(s,a)\leq {d\over C_{th} } \leq {1 \over 1-\gamma}$ for $(s,a)\in B$, and then get $J_C(\pi)-d\leq 0$.
This is an intuitive condition to be an expert policy trained by the constrained RL in \cref{eq:cmdp}.

\subsection{Proof of \cref{eq:in_context_predictor_uvc}}
To show the derivation, we start from
\begin{align}
    p(\tau \mid p_{1:K}, s_1^{\textnormal{test}})
    = \int_{\theta} p(\tau \mid p_{1:K}, s_1^{\textnormal{test}}, \theta)\, p(\theta)\, d\theta.
\end{align}

To check the optimality between the generated trajectory and the prompt, we prove
\begin{align}
    p(\mathcal{O}_{\textnormal{traj}} \mid p_{1:K}, s_1^{\textnormal{test}})
    = \int_{\theta} \sum_{z_1^{\textnormal{test}} \in \mathcal{Z}}
    \Bigg( g_{\pi_\theta}(\tau, z_1^{\textnormal{test}})
      \prod_{t=1} p(\mathcal{O}_t \mid s_t^{\textnormal{test}}, a_t^{\textnormal{test}}) \Bigg)
   p(z_1^\text{test}|\mathbf{p}_{1:K}, s_1^\text{test}, \theta)  e^{K \cdot r_K(\theta)}\, p(\theta)\, d\theta,
\end{align}
where
\begin{align}
    \sum_{z_1^{\textnormal{test}} \in \mathcal{Z}} \;
    \prod_{t=1} p(s_{t+1}^{\textnormal{test}} \mid s_t^{\textnormal{test}}, a_t^{\textnormal{test}})
    \underbrace{p(a_t^{\textnormal{test}} \mid s_t^{\textnormal{test}}, z_t^{\textnormal{test}})}_{\pi^{\textnormal{low}}_\phi}\;
    \underbrace{p_\theta(z_t^{\textnormal{test}} \mid s_t^{\textnormal{test}}, z_{t-1}^{\textnormal{test}})}_{\pi^{\textnormal{high}}_\theta}
    \;=:\; \sum_{z_1^{\textnormal{test}} \in \mathcal{Z}} g_{\pi_\theta}(\tau, z_1^{\textnormal{test}}).
\end{align}

By Bayes’ rule and the law of total probability, we have
\begin{align}
    p(\mathcal{O}_{\textnormal{traj}} \mid p_{1:K}, s_1^{\textnormal{test}})
    &= \int_{\theta} p(\tau \mid p_{1:K}, s_1^{\textnormal{test}}, \theta)\;
       p(\theta \mid p_{1:K}, s_t^{\textnormal{test}})\, d\theta \nonumber \\
    &\propto \int_{\theta} p(\tau \mid p_{1:K}, s_1^{\textnormal{test}}, \theta)\;
       p(p_{1:K}, s_t^{\textnormal{test}} \mid \theta)\, p(\theta)\, d\theta \nonumber \\
    &= \int_{\theta} \sum_{z_1^{\textnormal{test}} \in \mathcal{Z}}
       \Bigg( g_{\pi_\theta}(\tau, z_1^{\textnormal{test}})
       \prod_{t=1} p(\mathcal{O}_t \mid s_t^{\textnormal{test}}, a_t^{\textnormal{test}}) \Bigg)p(z_1^\text{test}|\mathbf{p}_{1:K}, s_1^\text{test}, \theta) 
       \frac{p(p_{1:K}, s_t^{\textnormal{test}} \mid \theta)}
            {p(p_{1:K}, s_t^{\textnormal{test}} \mid \theta^*)}\, p(\theta)\, d\theta \nonumber \\
    &= \int_{\theta} \sum_{z_1^{\textnormal{test}} \in \mathcal{Z}}
       \Bigg( g_{\pi_\theta}(\tau, z_1^{\textnormal{test}})
       \prod_{t=1} p(\mathcal{O}_t \mid s_t^{\textnormal{test}}, a_t^{\textnormal{test}}) \Bigg)p(z_1^\text{test}|\mathbf{p}_{1:K}, s_1^\text{test}, \theta) 
       \exp\!\big(K \cdot r_K(\theta)\big)\, p(\theta)\, d\theta.
\end{align}

By the definition of $r_K(\theta)$, under distinguishability for all
$\pi^{\textnormal{high}}_\theta \neq \pi^{\textnormal{high}}_{\theta^*}$,
$r_K(\theta)$ converges to a negative constant, and letting $K \to \infty$ gives
$\exp(r_K(\pi^{\textnormal{high}}_\theta))=0$ for all
$\pi^{\textnormal{high}}_\theta \neq \pi^{\textnormal{high}}_{\theta^*}$,
and $\exp(r_K(\pi^{\textnormal{high}}_\theta))=1$ for
$\pi^{\textnormal{high}}_\theta = \pi^{\textnormal{high}}_{\theta^*}$.
A detailed discussion of distinguishability can be found in \citet{xie2021explanation}.

Moreover, the graphical model contains $p(z_t \mid s_t, z_{t-1})$, 
which samples latent skill variables when $s_t$ is given.
Thus, the transformer with latent variables can be viewed as hierarchical RL
with high-level policy $\pi^{\textnormal{high}}_\theta = p(z_t \mid s_t, z_{t-1})$.

Hence, interpreting the transformer as implicit Bayesian inference of in-context learning \citep{xie2021explanation},
we obtain a safe high-level policy when the sampled trajectory instruction in \cref{alg:main}
satisfies Lyapunov conditions at every step.
Consequently, the in-context learner can act at the test-time initial state
according to a Lyapunov-stable policy.

\subsection{Proof of \cref{thm:lyapunov_gm}}
Since $\mathcal{U}_t$ and $\mathcal{V}_t$ are both optimality variables indicating the Lyapunov condition,
we apply probabilistic inference for RL as
\begin{align}
    \log p(\mathcal{U}_{1:T}, \mathcal{V}_{1:T} \mid \tau)
    &= \log \Bigg( p(s_1) \prod_{t=1}^T 
        p(\mathcal{U}_{t}, \mathcal{V}_{t} \mid s_t, a_t)\,
        p(s_{t+1}\mid s_t, a_t)\,
        p(a_t \mid s_t, z_t)\,
        p(z_t \mid s_t, z_{t-1}) \Bigg) \nonumber \\
    &= \sum_{t=1}^T \log p(\mathcal{U}_{t}, \mathcal{V}_{t} \mid s_t, a_t) + C.
\end{align}

When all $\mathcal{U}_t=\mathcal{V}_t=1$, the trajectory perfectly satisfies the Lyapunov condition.

Recall that a trajectory is asymptotically stable if
\begin{equation}\label{eq:lyapunov_condition}
    \textbf{(1)} \;\; G(s_e,a_e)=0, 
    \quad \textbf{(2)} \;\; G(s_t,a_t)>0,\;\forall (s_t,a_t)\neq(s_e,a_e), 
    \quad \textbf{(3)} \;\; G(s_t,a_t)\ge G(s_{t+1},a_{t+1}).
\end{equation}

We define our Lyapunov function as
\begin{equation}
    G_\textnormal{SAS}(s_t,a_t) 
    = \min_\pi \max_{t'} E(s_{t'},\pi(s_{t'})) - E(s_t,a_t),
\end{equation}
so that the equilibrium point satisfies $G_\textnormal{SAS}(s_e,a_e)=0$.
Thus, $\mathcal{U}_t=1$ corresponds to condition (2), 
and $\mathcal{V}_t=1$ corresponds to condition (3).

If we choose the distributions
\begin{align}
    p(\mathcal{U}_t=1 \mid s_t,a_t) &\propto 
        \exp\!\left(\1 [\,G_\textnormal{SAS}(s_t,a_t) > 0\,]\right), \\
    p(\mathcal{V}_t=1 \mid s_t,a_t) &\propto 
        \exp\!\left(\1 [\,G_\textnormal{SAS}(s_t,a_t) - G_\textnormal{SAS}(s_{t+1},a_{t+1}) \ge 0\,]\right),
\end{align}
then
\begin{align}
    \sum_{t=1}^T \log p(\mathcal{U}_t,\mathcal{V}_t \mid s_t,a_t)
    &= \sum_{t=1}^T \log p(\mathcal{V}_t \mid s_t,a_t,\mathcal{U}_t)\,p(\mathcal{U}_t \mid s_t,a_t) \nonumber \\
    &\propto \sum_{t=1}^T \1[\,G_\textnormal{SAS}(s_t,a_t) > 0\,] 
       + \sum_{t=1}^T \1[\,G_\textnormal{SAS}(s_t,a_t) - G_\textnormal{SAS}(s_{t+1},a_{t+1}) \ge 0\,].
\end{align}

Maximizing the above implies that the trajectory approaches the Lyapunov condition.

\subsection{Proof of \cref{thm:control_invariant}}\label{proof:hoeffding}

The goal of our method is to keep occupancy measures in the distribution of the target control-invariant set $\mathcal{R}=\{ (s_t,a_t) | c_1 \leq E(s_t, a_t) \leq c_2\}$ where $E(s_t,a_t) = - \log \rho(s_t,a_t)$ for utilizing the pretrained expert distribution.
As our Lyapunov function approximation is defined as
\begin{align*}
G(s_t,a_t) &= \underset{i=1,\cdots,N}{\min} \underset{j=1,\cdots T}{\max} E(s_j, \pi_i(s_j)) - E(s_t,a_t)
\end{align*}
for $N$ sample trajectories with the episode length $T$ in the first loop of Algorithm 1.
Suppose that $c_2$ is some constant that is larger than $\underset{i=1,\cdots,N}{\min} \underset{j=1,\cdots T}{\max} E(s_j, \pi_i(s_j))$ for any $N,T$.
For clarity, we assume that the state–action pairs 
$\{(s_t,a_t)\}_{t=1}^T$ are sampled i.i.d.\ from the dataset $\mathcal{D}$.   
In practice, trajectories in an MDP exhibit temporal dependence; 
however, the same arguments extend under standard $\beta$-mixing or 
ergodicity conditions, which yield similar bounds up to constant factors.  
The proof proceeds in two steps:
\begin{enumerate}
    \item \textbf{Bounding high-energy events.}  
    Using Markov’s inequality, we control the probability that a sampled 
    trajectory includes pairs with energy $E(s,a) = -\log \hat{\rho}(s,a)$ 
    exceeding the threshold $c_2$.  
    This yields the first term in \eqref{eq:control_invariant}.
    \item \textbf{Ensuring Lyapunov monotonicity.}  
    Applying Hoeffding’s inequality to the Lyapunov descent indicators 
    $\{\mathcal{V}_t\}$, we bound the probability that cumulative descent 
    violations exceed $\kappa(c_2-c_1)/L$.  
    This leads to the second exponential term.
\end{enumerate}
Combining both bounds yields the probability estimate in 
\eqref{eq:control_invariant}.  
We now demonstrate that Algorithm 1 reduces the probability of escaping from the control invariant set as the numbers of iterations, $N$ and $M$ for the first and second loops, respectively, increase.

\begin{assumption}\label{assumption}
    The difference $\lVert G(s_t,a_t) - G(s_{t+1}, a_{t+1})\rVert$ in Eq. (3) over the transition $\mathcal{T}$ is bounded as $ \lVert G(s_t,a_t) - G(s_{t+1}, a_{t+1})\rVert \leq L$ for all $t$.
\end{assumption}

\begin{proof}
    First note that $P\left[\tau \not\subset \mathcal{R} \right]$ is less than the sum of the probability of $E(s_t,a_t) \geq c_2$ for all data points in $N$ trajectories and the probability that all $M$ trials moves below $E(s_t,a_t)\leq c_1$. By using Markov's inequality for the first term of RHS and Hoeffding's inequality for the second term of RHS. Then, we have
    \begin{align*}
        P\left[\tau_\text{best} \not\subset \mathcal{R} \right] &\leq \left(P\left[ E(s,a) \geq c_2\right]\right)^{NT} + \left( P\left[ \sum_{t=1}^T \1( \mathcal{V}_t\neq 1) \geq {\kappa(c_2 - c_1) \over L} \right]\right)^M\\
        &\leq \left[ {\E_{(s,a) \sim \mathcal{D}} [E(s,a)] \over c_2} \right]^{NT} +  \exp\left(- {2 M \kappa^2 (c_2-c_1)^2 \over TL^2} \right)
    \end{align*}
    for some constant $\kappa$ to describe the average distance to escape.

    We note that the probability of $E(s_t,a_t) \geq c_2$ for all data points in $N$ trajectories implies $\1(\mathcal{U}_t \neq 1)$ because $G(s_t,a_t)=\underset{i=1,\cdots,N}{\min} \underset{j=1,\cdots T}{\max} E(s_j, \pi_i(s_j)) - E(s_t,a_t)<c_2 - E(s_t,a_t)<0$.

\end{proof}

\end{document}